\documentclass[letterpaper]{article} 
\usepackage{aaai2027}  
\usepackage[hyphens]{url}  
\usepackage{graphicx} 
\usepackage{natbib}  
\usepackage{caption} 
\usepackage{amsmath}          
\usepackage{amssymb}
\usepackage{amsthm}           
\usepackage{multirow}
\usepackage{subcaption}
\usepackage{algorithm}
\usepackage{algorithmic}

\DeclareMathOperator{\op}{op}

\newtheorem{theorem}{Theorem}
\newtheorem{lemma}[theorem]{Lemma}
\newtheorem{definition}[theorem]{Definition}
\newtheorem{assumption}[theorem]{Assumption}
\newtheorem{remark}[theorem]{Remark}   
\usepackage{newfloat}
\usepackage{listings}
\DeclareCaptionStyle{ruled}{labelfont=normalfont,labelsep=colon,strut=off} 
\floatstyle{ruled}
\newfloat{listing}{tb}{lst}{}
\floatname{listing}{Listing}

\usepackage{booktabs}

\title{Socialized Division and Collaboration: \\Rethinking Class-Incremental Learning under Optimization Conflicts}

\author {
    Xinjie Yao,\textsuperscript{\rm 1}
    Zhihe Fan,\textsuperscript{\rm 2}
    Yunqi Zhu,\textsuperscript{\rm 3}
    Jiaqi Zhou,\textsuperscript{\rm 4}
    Dengyu Zhao,\textsuperscript{\rm 4}
    Zhoupeng Guo,\textsuperscript{\rm 5}
    Yan Fan,\textsuperscript{\rm 6}
    Guosong Jiang,\textsuperscript{\rm 4}
    Pengfei Zhu\textsuperscript{\rm 4,5}\thanks{Corresponding author.}
}
\affiliations{
    \textsuperscript{\rm 1} Faculty of Information Engineering and Automation, Kunming University of Science and Technology\\
    \textsuperscript{\rm 2} School of Sports Training, Tianjin University of Sport\\
    \textsuperscript{\rm 3} School of Computer Science and Engineering, University of New South Wales\\
    \textsuperscript{\rm 4} School of Artificial Intelligence, Tianjin University\\
    \textsuperscript{\rm 5} School of Automation, Southeast University\\
    \textsuperscript{\rm 6} National University of Defense Technology\\
}

\begin{document}

\maketitle

\begin{abstract}
Class-incremental learning is commonly instantiated as a single-model paradigm, where a unified model sequentially adapts to an unbounded stream of sessions. While effective under mild distributional shifts, this formulation becomes strained when successive sessions induce incompatible optimization directions, leading to destructive interference and catastrophic forgetting. We argue that such forgetting reflects a structural limitation of enforcing heterogeneous learning dynamics within a single parameter space. Motivated by social solidarity theory, we propose Socialized Division and Collaboration (SDC) as a reformulation of continual learning that decomposes session learning across specialized models in response to optimization conflicts, while enabling coordinated collaboration. To support this formulation with a principled allocation mechanism, we introduce an energy-based session–model compatibility criterion grounded in Helmholtz free energy, which guides adaptive session allocation and model evolution under conflicting objectives. This framework integrates session assignment, model evolution, and collaborative inference into a unified pipeline, offering an alternative to monolithic continual learning formulations and highlighting a broader design principle for learning under persistent optimization conflicts.
\end{abstract}


\section{Introduction}

In machine societies, system level progress depends on continual learning and reliable knowledge integration. Classical class incremental learning (CIL) trains a single model sequentially over an unbounded stream of sessions~\cite{Zhou2023ClassIncrementalLA,zhang2025few}. This monolithic update paradigm, however, becomes fragile as sessions accumulate and their distributions diverge. Under substantial distribution shifts or conflicting objectives, CIL often fails to satisfy competing constraints simultaneously, resulting in incompatible parameter updates and catastrophic forgetting.
\begin{figure}[htbp!]  
  \centering
  \includegraphics[width=\columnwidth, keepaspectratio]{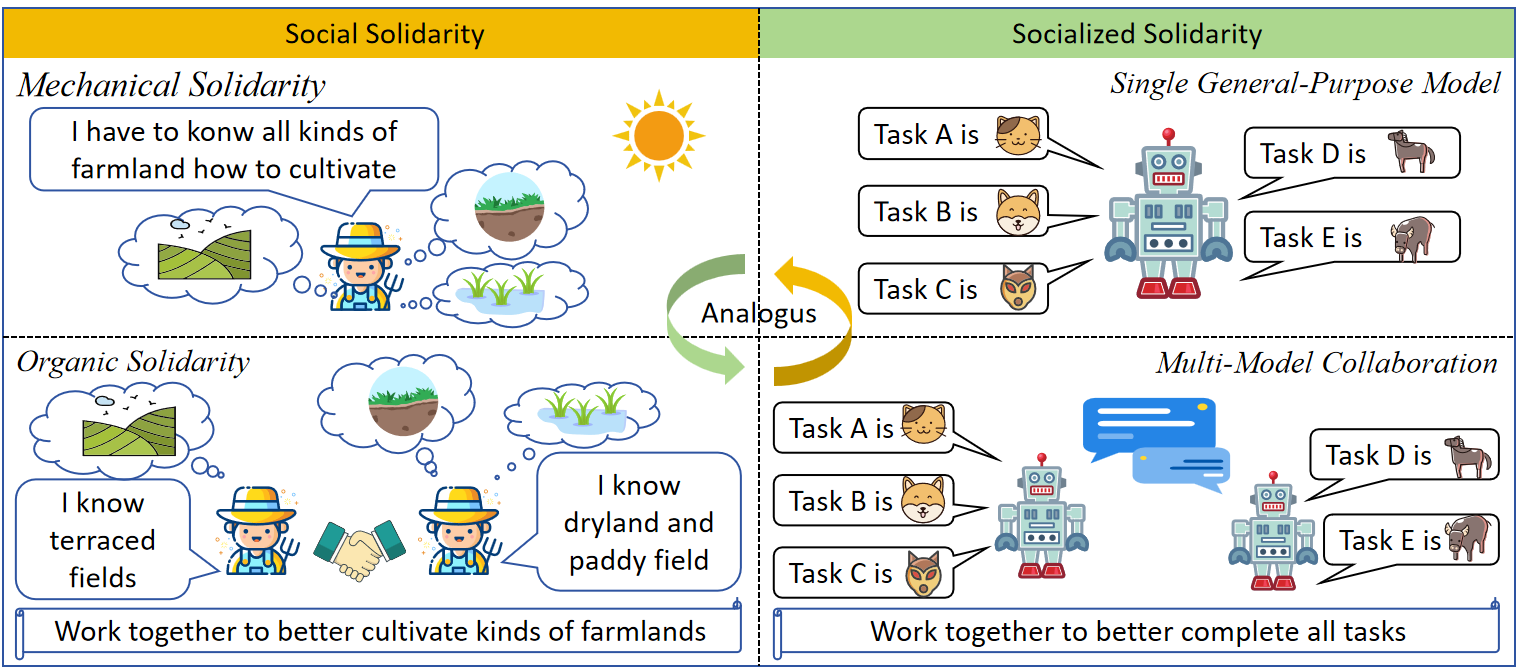}  
  \caption{Social solidarity in human society versus socialized solidarity in machine society.}  
  \label{fig:motivated}
\end{figure}
Human societal evolution highlights what is fundamentally missing. Early societies were organized around mechanical solidarity, where similar roles and shared knowledge made collective functioning straightforward. As societies expanded, they evolved toward organic solidarity, in which specialization and interdependence became essential~\cite{durkheim1997division}. The driving force of this transition was not increased individual capacity, but structured division of labor coupled with the systematic integration of collective intelligence~\cite{pmlr-v235-yao24d}.

In CIL, Low-Rank Adaptation (LoRA)~\cite{Hu2021LoRALA} partially reflects this principle by separating parameter updates across sessions. By constraining updates to low rank subspaces, LoRA reuses historical optimization directions to accommodate new data and mitigate forgetting. Nevertheless, this separation remains implicit and local. It neither resolves inter session interference when optimization directions conflict, nor provides a principled mechanism to align heterogeneous sessions with models of different learning capacities~\cite{23}. When sessions induce divergent gradients, distributional constraints prevent a single parameterization from retaining all knowledge, and existing variants fall short of genuine division and collaboration.

More broadly, current paradigms~\cite{yao2025socialized,li2025graphs} rarely achieve structural division. Most approaches perform separation at the data or parameter level, while the learning process remains dominated by session specific optimization directions and the models be systematically divided and collaboratively integrated~\cite{Zhang2023ExploringCM} so that the system can stably retain and refine prior knowledge while continuously assimilating new sessions? A careful examination of existing work reveals two unresolved issues. 

\begin{enumerate}
    \item[1:] \emph{How to dynamically assign sessions to models?}
    \item[2:] \emph{How to balance specialization and collaboration?}
\end{enumerate}

Guided by social solidarity theory, which assigns roles according to individual strengths, we propose the Socialized Division and Collaboration (SDC) paradigm. SDC reframes CIL as the evolution of a multi model societal system, in which sessions are explicitly allocated to models that can learn them with minimal interference, and knowledge is integrated through structured collaboration. By quantifying an affinity energy between sessions and models, SDC enables dynamic session allocation and division aware collaborative inference, thereby reducing forgetting and improving system level performance.

To instantiate SDC, we design the framework grounded in Helmholtz free energy. SDC consists of three core components: session division, model evolution, and collaborative inference. Sessions are assigned to models by minimizing Helmholtz free energy, which serves as a unified measure of session model compatibility. Models then evolve on their allocated sessions, and the system performs full classification through collaboration among specialized models, collectively alleviating catastrophic forgetting. These contributions are detailed as follows:
\begin{itemize}
\item We propose SDC as a practical paradigm for CIL, reframing continual learning through explicit division and structured collaboration.
\item We discuss that shared low-rank learning has an inherent bottleneck under heterogeneous sessions, which can be strictly reduced through division.
\item We realize SDC through the energy selection coevolution framework for dynamic grouping and division-aware collaboration.
\end{itemize}

\section{Related Work}

\subsection{Class-Incremental Learning}
\textbf{Class-Incremental Learning (CIL)} is a canonical continual learning setting in which a model must incrementally acquire new classes while retaining previously learned knowledge. To address catastrophic forgetting, a wide range of approaches have been proposed, which can be broadly grouped into three categories.
\textbf{(1) Rehearsal-based methods}~\cite{27,23,3,5} mitigate forgetting by storing or generating representative samples from earlier classes and replaying them alongside data from newly introduced classes.
\textbf{(2) Regularization-based methods}~\cite{26,6,7,8} discourage forgetting by constraining parameter updates for new classes, typically through importance-weighted penalties that limit changes to parameters critical for previously learned knowledge.
\textbf{(3) Parameter-isolation methods}~\cite{9,10,12,13} reduce interference by allocating class- or session-specific parameter subsets within a shared architecture, enforcing structural separation between representations learned at different stages.

Despite their algorithmic diversity, most existing CIL approaches operate under a single-model learning paradigm. In class-incremental settings, the optimization objectives induced by newly introduced classes often conflict with representations formed for earlier classes. When all classes are optimized within a shared parameter space, such conflicts are difficult to resolve and manifest as persistent interference between old and new knowledge. As learning proceeds over many increments, these unresolved conflicts accumulate, ultimately resulting in catastrophic forgetting.

\subsection{CIL with Foundation Models}
\textbf{CIL with foundation models} leverages large-scale pre-trained models to facilitate knowledge transfer across sessions and alleviate catastrophic forgetting in continual learning. By exploiting strong and transferable representations acquired during pre-training, these approaches aim to improve performance across successive incremental sessions. Existing methods can be broadly categorized into two groups.
\textbf{(1) Prompt-based adaptation methods}~\cite{Wang2021LearningTP,18,Smith2022CODAPromptCD,24} integrate Vision Transformers with prompt tuning, where session-specific prompts are selectively activated or composed to adapt a shared backbone, enabling continual adaptation while retaining previously learned knowledge.
\textbf{(2) LoRA-based methods}~\cite{Liang2024InfLoRAIL,23,25} support parameter-efficient continual learning by injecting session-specific information through low-rank adaptations while keeping the foundation backbone frozen, thereby reducing interference and training overhead.

Despite these advances, foundation model--based continual learning methods remain fundamentally constrained by the single-model paradigm. As new sessions are introduced, heterogeneous optimization objectives are forced to coexist within a shared parameter space, leading to accumulating knowledge conflicts that limit effective retention. This structural limitation is not unique to foundation models, but also characterizes traditional continual learning approaches. To overcome this bottleneck, we adopt a collaborative multi-model perspective with explicit division of labor, allowing different models to specialize in compatible sessions and thereby alleviating cross-session interference.

\section{Division-Aware CIL}
\label{sec:ch3}

This section characterizes cross-session inconsistency as the dominant source of error in low-rank adaptation and examines how free energy width governs this term at the level of session groups. These theoretical observations motivate a division-aware view of CIL and provide the foundation for the method developed in the next section. The full proofs of the theorems are provided in the appendix.

\textbf{Problem setup:} We consider a sequential stream of sessions $\{1,\ldots,N\}$, where each session $i$ is associated with a session-specific optimal update matrix $\Delta W_i^\star$. Our objective is to establish that grouping sessions according to a free energy score $F_i$ produces more coherent session groups with smaller within-group radii. As a result, the cross-session inconsistency term $\varepsilon_S$ is reduced, leading to a tighter upper bound on the approximation error.

\begin{definition}[Chebyshev center and radius]
\label{def:chebyshev}
Let $S \subseteq \{1,\ldots,N\}$ denote a set of session indices, where each session $i \in S$ is associated with a
session-specific optimal update matrix $\Delta W_i^\star$. Under the operator norm $\|\cdot\|_{\op}$, the
Chebyshev center of the set $\{\Delta W_i^\star\}_{i\in S}$ is defined as
\begin{equation}
\Delta W_S^\star \in \arg\min_{X}\ \max_{i\in S}\ \|\Delta W_i^\star - X\|_{\op},
\end{equation}
and the corresponding Chebyshev radius is
\begin{equation}
\varepsilon_S := \min_{X}\ \max_{i\in S}\ \|\Delta W_i^\star - X\|_{\op}.
\end{equation}
\end{definition}

The Chebyshev radius $\varepsilon_S$ measures the maximal dispersion of session-specific optimal updates within the set $S$
around an optimally chosen reference point. A larger $\varepsilon_S$ indicates weaker within-set alignment and
greater difficulty for parameter sharing, whereas a smaller $\varepsilon_S$ corresponds to stronger alignment and
more favorable sharing conditions.

\begin{theorem}[Low-rank approximation with heterogeneity floor]
\label{thm:low_rank_approx}
Building on prior work~\cite{23}, suppose Assumption~\ref{assump:dispersion} holds.
Let $\Delta W^{[k]}$ denote the rank-$k$ approximation of $\Delta W^\star$ obtained by retaining the top-$k$
principal components. Under standard spectral separation and optimization conditions,
there exist numerical constants $c$ and $c'$ such that, with high probability, gradient descent produces iterates
satisfying
\begin{equation}
\|A_{i_k} B_{i_k} - \Delta W^{[k]}\|_{\op}
\le \epsilon \sigma_1 + \varepsilon_S, \quad \forall k = 1,2,\ldots,j.
\end{equation}
\end{theorem}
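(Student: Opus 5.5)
The plan is a triangle-inequality decomposition. The cited prior result~\cite{23} supplies a clean low-rank recovery guarantee for a single target. The Chebyshev radius of Definition~\ref{def:chebyshev} then absorbs the discrepancy between what each session actually fits and the shared target $\Delta W^\star$.

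\textbf{Step 1: set up the shared target.} For the session set $S$ covering the first $j$ sessions, take $\Delta W^\star := \Delta W_S^\star$, the Chebyshev center of $\{\Delta W_i^\star\}_{i\in S}$. Define $\Delta W^{[k]}$ as its top-$k$ truncated SVD. By construction,
\begin{equation*}
\|\Delta W_{i}^\star - \Delta W^\star\|_{\op}\le \varepsilon_S \quad \text{for every } i\in S .
\end{equation*}
Assumption~\ref{assump:dispersion} is used here. I expect it to state that the per-session targets share a common leading spectrum up to this dispersion. That lets the spectral gap of $\Delta W^\star$ at index $k$ be inherited, up to an $O(\varepsilon_S)$ perturbation, by each session's target.

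\textbf{Step 2: invoke the single-target guarantee.} Under the stated spectral separation and optimization conditions, the theorem of~\cite{23} gives, with high probability, a bound for the gradient-descent iterates of the factors. At the stage where the $k$-th component is learned, the product satisfies
\begin{equation*}
\|A_{i_k}B_{i_k} - \widetilde{W}^{[k]}\|_{\op}\le \epsilon\,\sigma_1 ,
\end{equation*}
where $\widetilde{W}$ is the matrix whose gradient signal that session actually observes. Here $\sigma_1$ is the top singular value, and $c,c'$ control the initialization scale and step size that make $\epsilon$ small. I would quote this bound rather than reprove it. The work is to check that its hypotheses hold for the effective per-session target.

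\textbf{Step 3: transfer to the shared target.} Apply the triangle inequality:
\begin{equation*}
\|A_{i_k}B_{i_k}-\Delta W^{[k]}\|_{\op}\le \|A_{i_k}B_{i_k}-\widetilde W^{[k]}\|_{\op}+\|\widetilde W^{[k]}-\Delta W^{[k]}\|_{\op}.
\end{equation*}
The first term is bounded by $\epsilon\sigma_1$ from Step 2. For the second term, one could use Weyl/Mirsky-type bounds, but rank-$k$ truncation is not $1$-Lipschitz in operator norm in general. The cleaner route is to state the recovery in Step 2 directly relative to the shared target's top-$k$ subspace. Since $\widetilde W$ differs from $\Delta W^\star$ by at most $\varepsilon_S$, this makes the excess at most $\varepsilon_S$ plus terms of order $\varepsilon_S^2/\text{gap}$. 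Those higher-order terms are absorbed into $\epsilon\sigma_1$ through the constants.

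\textbf{Main obstacle.} The hard part is this perturbation step: obtaining an additive $\varepsilon_S$, with no factor of $2$ and no gap-dependent amplification. If a direct argument fails, I would first try projecting onto the shared top-$k$ singular subspace of $\Delta W^\star$, whose stability under an $\varepsilon_S$ perturbation is given by Davis--Kahan under the assumed spectral separation. I would then bound $\|P(\widetilde W-\Delta W^\star)Q\|_{\op}\le\varepsilon_S$, using that projections are contractions. Failing that, I would accept a constant multiple of $\varepsilon_S$ and fold it into the statement's numerical constants.

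Finally, take a union bound over $k=1,\dots,j$ to keep the high-probability statement uniform.
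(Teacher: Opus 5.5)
There is a genuine gap, and it starts with what you guessed Assumption~\ref{assump:dispersion} says. In the paper it is the same assumption as Assumption~\ref{assump:sdlora}, and it is not a spectral-alignment hypothesis. It directly postulates that for every session subset $S$ and every $i\in S$, $\|A_iB_i-\Delta W_S^{[:j_S]}\|_{\op}\le\epsilon\,\sigma_1(\Delta W_S^\star)+\varepsilon_S$, where $\Delta W_S^\star$ and $\varepsilon_S$ are the Chebyshev center and radius of Definition~\ref{def:chebyshev}. The theorem is this postulate read off for the set $S$ of sessions sharing one adapter, with $\Delta W^\star=\Delta W_S^\star$ and $\sigma_1=\sigma_1(\Delta W_S^\star)$. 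The paper also loosely identifies $\Delta W^{[k]}$ with the trained approximation $\Delta W_S^{[:j_S]}$. The paper gives no separate proof and never derives the $+\varepsilon_S$ floor from the homogeneous result of~\cite{23}; it assumes it. Read that way, your proof is one line: no triangle inequality, Davis--Kahan, or union bound is needed.

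The independent derivation you sketch would not go through. \textbf{Step~2.} The result of~\cite{23} is about gradient descent toward a single fixed $\Delta W^\star$: the $k$-th component is learned on top of components already fitted to that same matrix. With heterogeneous $\Delta W_{i_k}^\star$, the stage-$k$ iterate carries components fitted to different earlier targets. So there is no $\widetilde W$ whose hypotheses you can check, because heterogeneity is exactly the violated hypothesis. \textbf{Step~3.} Your claim that the excess is $\varepsilon_S+O(\varepsilon_S^2/\mathrm{gap})$ is false, because truncation amplifies perturbations at first order. Take $B=\mathrm{diag}(s_1,s_2)$ with $s_1>s_2>0$ and $A=B+e(e_1e_2^\top+e_2e_1^\top)$, so $\|A-B\|_{\op}=|e|$. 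The off-diagonal entry of $A^{[1]}$ is $e\,\lambda_1(A)/\sqrt{(s_1-s_2)^2+4e^2}$ with $\lambda_1(A)\ge s_1$. Hence $\|A^{[1]}-B^{[1]}\|_{\op}/\|A-B\|_{\op}\ge s_1/\sqrt{(s_1-s_2)^2+4e^2}\to 1+s_2/(s_1-s_2)$ as $e\to 0$. The excess is therefore linear in the perturbation, with a factor $\sigma_{k+1}/\mathrm{gap}_k$ that can be arbitrarily large. You cannot absorb it into $\epsilon\sigma_1$: $\epsilon$ is a fixed accuracy parameter (capped by $1/(m+n+r)$), and no hypothesis ties $\varepsilon_S$ to it. \textbf{Fallbacks.} The projection route controls only $\|P\widetilde WQ-\Delta W^{[k]}\|_{\op}$. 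Gradient descent on $\widetilde W$ tracks $\widetilde W^{[k]}$, not $P\widetilde WQ$, so the same first-order term reappears. Folding a constant into $c,c'$ proves a different statement, since those constants do not enter the displayed inequality and its coefficient on $\varepsilon_S$ is exactly $1$. The coefficient-one, gap-free floor does not follow from single-target dynamics, which is why the paper takes it as an assumption.
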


\begin{remark}[Irreducible heterogeneity floor]
The bound in Theorem~\ref{thm:low_rank_approx} decomposes the approximation error into an optimization term and an
irreducible heterogeneity term $\varepsilon_S$. While the optimization term can be reduced through training,
$\varepsilon_S$ depends solely on cross-session inconsistency within the sharing set and therefore constitutes a
fundamental bottleneck for shared low-rank adaptation.
\end{remark}

\begin{assumption}[Free energy Lipschitz consistency]
\label{ass:lipschitz_hp}
There exists a constant $L_F>0$ such that for any unordered session pair $\{p,q\}$
(with $p\neq q$),
\begin{equation}
\Pr\!\Big(
\|\Delta W_p^\star-\Delta W_q^\star\|_{\op}
\le L_F\,|F_p-F_q|+\eta_\delta
\Big)
\ \ge\ 1-\delta.
\label{eq:lipschitz_hp_pair}
\end{equation}
where $\delta\in(0,1)$ is a confidence level and $\eta_\delta\ge 0$ is a residual term.
\end{assumption}

\begin{definition}[Free energy grouping]
\label{def:free_energy_grouping}
Let the session indices $\{1,\ldots,N\}$ be partitioned into $L$ disjoint groups
$\{G_\ell\}_{\ell=1}^{L}$ according to their free energy scores $\{F_i\}$.
The partition is called a free energy grouping with width $\tau$
if, for every group $G_\ell$, all session pairs $(p,q)\in G_\ell$ satisfy:
\end{definition}

\begin{lemma}[Group-level error control under free energy grouping]
\label{lem:group_hp}
Assume Assumption~\ref{ass:lipschitz_hp} and Assumption~\ref{assump:sdlora} hold.
Under the free energy width condition in Definition~\ref{def:free_energy_grouping},
for any group $G_\ell$ define $\delta_\ell := \binom{|G_\ell|}{2}\delta$.
Then there exists an event $\mathcal{E}_{G_\ell}$ with
$\Pr(\mathcal{E}_{G_\ell}) \ge 1 - \delta_\ell$
such that, on $\mathcal{E}_{G_\ell}$,
\begin{equation}
\resizebox{0.9\linewidth}{!}{$ 
\begin{aligned}
    \max_{i\in G_\ell}
    \|A_iB_i - \Delta W_{G_\ell}^{[:j_\ell]}\|_{\op}
    &\le \epsilon\,\sigma_1(\Delta W_{G_\ell}^{\star}) + \varepsilon_{G_\ell} \\
    &\le \epsilon\,\sigma_1(\Delta W_{G_\ell}^{\star}) + (L_F\tau + \eta_\delta).
\end{aligned}
$}
\label{lemma16}
\end{equation}
In contrast, without grouping, the global worst-case session satisfies
\begin{equation}
\max_{1\le i \le N}
\|A_iB_i - \Delta W_{1:N}^{[:j]}\|_{\op}
\le \epsilon\,\sigma_1(\Delta W_{1:N}^{\star}) + \varepsilon_{1:N}.
\label{eq:global_worst}
\end{equation}
\end{lemma}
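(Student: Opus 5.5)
The first inequality will come from Theorem~\ref{thm:low_rank_approx} applied group by group. The second will come from bounding the Chebyshev radius $\varepsilon_{G_\ell}$ through Assumption~\ref{ass:lipschitz_hp} combined with a union bound over the pairs in $G_\ell$.

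\textbf{Step 1: build the event.} Fix a group $G_\ell$. For each unordered pair $\{p,q\}\subseteq G_\ell$, let $E_{pq}$ be the event in \eqref{eq:lipschitz_hp_pair}, so that $\Pr(E_{pq}^c)\le\delta$. Set $\mathcal{E}_{G_\ell}:=\bigcap_{\{p,q\}\subseteq G_\ell}E_{pq}$. There are $\binom{|G_\ell|}{2}$ such pairs, so the union bound gives $\Pr(\mathcal{E}_{G_\ell})\ge 1-\binom{|G_\ell|}{2}\delta=1-\delta_\ell$. No independence between pairs is needed. If $|G_\ell|=1$, then $\delta_\ell=0$, the event is trivial, and $\varepsilon_{G_\ell}=0$.

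\textbf{Step 2: bound the radius on $\mathcal{E}_{G_\ell}$.} The width condition of Definition~\ref{def:free_energy_grouping} gives $|F_p-F_q|\le\tau$ for all $p,q\in G_\ell$. Hence, on $\mathcal{E}_{G_\ell}$, every pairwise distance satisfies $\|\Delta W_p^\star-\Delta W_q^\star\|_{\op}\le L_F\tau+\eta_\delta$. Now take any member $q\in G_\ell$ as the candidate center $X=\Delta W_q^\star$ in Definition~\ref{def:chebyshev}. Then
\[
\varepsilon_{G_\ell}\le\max_{p\in G_\ell}\|\Delta W_p^\star-\Delta W_q^\star\|_{\op}\le L_F\tau+\eta_\delta .
\]
The Chebyshev radius is at most the diameter, which is what we need here. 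Note that the sharper $\tfrac12$-diameter bound does not hold in general for the operator norm, so choosing a group member as the center is the safe route.

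\textbf{Step 3: the optimization term.} Invoke Theorem~\ref{thm:low_rank_approx}, with Assumption~\ref{assump:sdlora} supplying its spectral and optimization hypotheses, using the sharing set $S=G_\ell$. The target is $\Delta W^\star_{G_\ell}$ with rank truncation $j_\ell$. This yields $\|A_iB_i-\Delta W_{G_\ell}^{[:j_\ell]}\|_{\op}\le\epsilon\,\sigma_1(\Delta W_{G_\ell}^\star)+\varepsilon_{G_\ell}$ for each $i\in G_\ell$, and taking the maximum over $i$ gives the first line of \eqref{lemma16}. Chaining with Step~2 gives the second line.

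\textbf{Step 4: the ungrouped bound.} The bound \eqref{eq:global_worst} is simply Theorem~\ref{thm:low_rank_approx} with $S=\{1,\dots,N\}$, so no further argument is required.

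\textbf{Main obstacle.} The delicate point is combining probabilities. Theorem~\ref{thm:low_rank_approx} holds only ``with high probability'', while $\mathcal{E}_{G_\ell}$ is stated with probability exactly $1-\delta_\ell$. I would handle this in one of two ways. The first is to read the theorem's guarantee as holding under Assumption~\ref{assump:sdlora}, that is, conditional on or deterministic given the optimization event, so the only randomness charged to $\mathcal{E}_{G_\ell}$ is the Lipschitz event. The second is to note that the first inequality of \eqref{lemma16} holds on the theorem's event and the second is purely a consequence of Step~2. If a fully explicit statement is required, I would intersect with the theorem's event and add its failure probability to $\delta_\ell$.
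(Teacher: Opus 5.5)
Your proposal is correct and follows essentially the same route as the paper, which builds $\mathcal{E}_{G_\ell}$ by a union bound over the pairwise Lipschitz events. It bounds $\varepsilon_{G_\ell}$ by the diameter using a group member as center (Lemmas~\ref{lem:radius_le_diameter} and~\ref{lem:width_controls_diam_hp}), and obtains the grouped and global approximation bounds from the SD-LoRA-type bound with $S=G_\ell$ and $S=\{1,\ldots,N\}$. The one difference is that the paper cites Assumption~\ref{assump:sdlora} directly rather than Theorem~\ref{thm:low_rank_approx}; that assumption is stated deterministically for every subset $S$, so the first inequality holds surely, your probability-combination concern disappears, and your first resolution is exactly what the paper does.
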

\begin{remark}[Effect of grouping on heterogeneity]
Lemma~\ref{lem:group_hp} shows that free energy grouping replaces the global inconsistency term
$\varepsilon_{1:N}$ with smaller within-group heterogeneity terms $\varepsilon_{G_\ell}$,
which is the key mechanism through which grouping improves the worst-case error bound.
\end{remark}

 \begin{figure*}[ht!]  
  \centering
  \includegraphics[
    width=\textwidth,        
    height=0.4\textheight,   
    keepaspectratio          
  ]{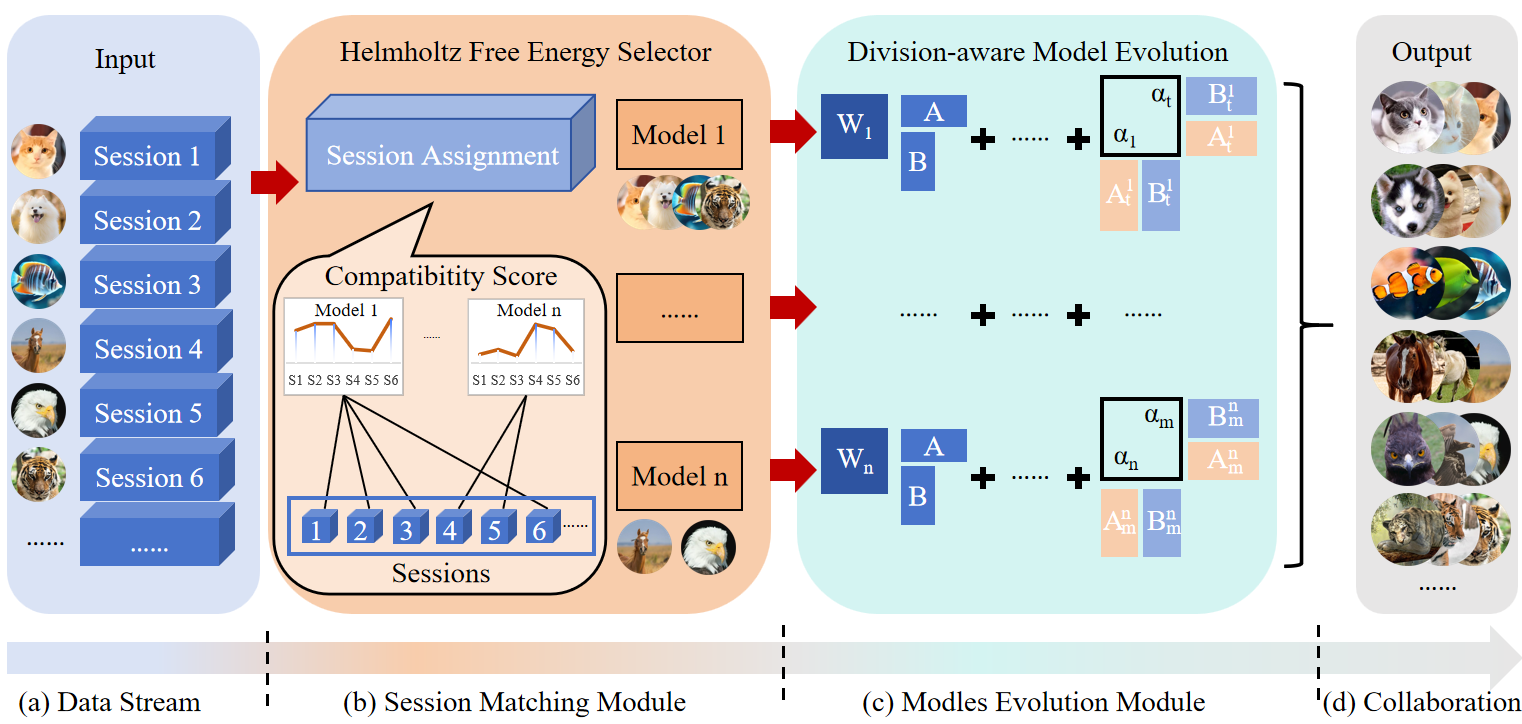}  
  \caption{Overview of the proposed Socialized Division and Collaboration (SDC) framework. SDC consists of a Helmholtz free energy selector for session division, a model evolution module for specialized learning, and a collaboration module for structured integration, jointly enabling division-aware and collaboration-aware continual learning.}  
  \label{fig:sdc}
\end{figure*}

\begin{theorem}[Strict improvement via free energy grouping]
\label{mainthm}
Under the conditions of Lemma~\ref{lem:group_hp}, suppose $0 < \epsilon < \tfrac{1}{2}$ and
the grouping width $\tau$ satisfies
\begin{equation}
L_F\tau + \eta_\delta < (1-2\epsilon)\,\varepsilon_{1:N}.
\label{thm8}
\end{equation}
Then the grouped bound is strictly tighter than the global sharing bound, i.e.,
\begin{equation}
\max_\ell
\bigl(
\epsilon\,\sigma_1(\Delta W_{G_\ell}^{\star}) + \varepsilon_{G_\ell}
\bigr)
<
\epsilon\,\sigma_1(\Delta W_{1:N}^{\star}) + \varepsilon_{1:N}.
\label{tighter_bound}
\end{equation}
\end{theorem}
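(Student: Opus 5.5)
The plan is to compare the two bounds term by term. I will reduce the heterogeneity term using Lemma~\ref{lem:group_hp}, and control the spectral term $\sigma_1(\Delta W_{G_\ell}^\star)$ through the geometry of Chebyshev centers (Definition~\ref{def:chebyshev}). Throughout I work on the intersection of the events $\mathcal{E}_{G_\ell}$ from Lemma~\ref{lem:group_hp}. On that event, for every group,
\begin{equation*}
\varepsilon_{G_\ell} \le L_F\tau+\eta_\delta < (1-2\epsilon)\,\varepsilon_{1:N}.
\end{equation*}
The hypothesis \eqref{thm8} then forces $\varepsilon_{1:N}>0$, because its left-hand side is nonnegative.

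\textbf{Step 1: compare the centers.} Fix a group $G_\ell$ and any $i\in G_\ell$. By the definition of the global Chebyshev center, $\|\Delta W_i^\star-\Delta W_{1:N}^\star\|_{\op}\le \varepsilon_{1:N}$. By the definition of the group center, $\|\Delta W_i^\star-\Delta W_{G_\ell}^\star\|_{\op}\le \varepsilon_{G_\ell}$. The triangle inequality gives $\|\Delta W_{G_\ell}^\star-\Delta W_{1:N}^\star\|_{\op}\le \varepsilon_{1:N}+\varepsilon_{G_\ell}$. Since $\sigma_1(\cdot)=\|\cdot\|_{\op}$ is $1$-Lipschitz in the operator norm, it follows that
\begin{equation*}
\sigma_1(\Delta W_{G_\ell}^\star)\le \sigma_1(\Delta W_{1:N}^\star)+\varepsilon_{1:N}+\varepsilon_{G_\ell}.
\end{equation*}

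\textbf{Step 2: assemble the bound.} Substituting Step 1 into the left side of \eqref{tighter_bound} gives
\begin{equation*}
\epsilon\,\sigma_1(\Delta W_{G_\ell}^\star)+\varepsilon_{G_\ell}
\le \epsilon\,\sigma_1(\Delta W_{1:N}^\star)+\epsilon\,\varepsilon_{1:N}+(1+\epsilon)\,\varepsilon_{G_\ell}.
\end{equation*}
It therefore suffices to show $(1+\epsilon)\varepsilon_{G_\ell}<(1-\epsilon)\varepsilon_{1:N}$, i.e.\ $\varepsilon_{G_\ell}<\tfrac{1-\epsilon}{1+\epsilon}\varepsilon_{1:N}$.

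\textbf{Step 3: verify the constant.} We already have $\varepsilon_{G_\ell}<(1-2\epsilon)\varepsilon_{1:N}$. Moreover $(1-2\epsilon)(1+\epsilon)=1-\epsilon-2\epsilon^2\le 1-\epsilon$, so $1-2\epsilon\le \tfrac{1-\epsilon}{1+\epsilon}$. The condition $\epsilon<\tfrac12$ ensures $1-2\epsilon>0$, so the hypothesis is nonvacuous. This yields the strict inequality for each $\ell$. Since there are finitely many groups, taking the maximum over $\ell$ preserves strictness.

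\textbf{Main obstacle.} The delicate point is Step 1. The spectral term for a group is not obviously smaller than the global one, and a group center can in principle have a larger top singular value. The argument absorbs this excess into the slack $2\epsilon\,\varepsilon_{1:N}$ that the factor $(1-2\epsilon)$ in \eqref{thm8} provides. If a cleaner constant were wanted, one could try to bound $\|\Delta W_{G_\ell}^\star-\Delta W_{1:N}^\star\|_{\op}$ by $\varepsilon_{1:N}$ alone by choosing a suitable group center. However, the crude triangle-inequality bound already suffices for the stated condition.
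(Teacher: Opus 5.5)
Your proof is correct and is essentially the paper's argument: via its spectral-gap lemmas, the paper also bounds $\sigma_1(\Delta W_{G_\ell}^\star)\le\sigma_1(\Delta W_{1:N}^\star)+\varepsilon_{1:N}+\varepsilon_{G_\ell}$ by the triangle inequality through a session $i\in G_\ell$, and then absorbs the excess into the $2\epsilon\,\varepsilon_{1:N}$ slack using $\varepsilon_{G_\ell}\le L_F\tau+\eta_\delta$. The only difference is that the paper immediately uses monotonicity $\varepsilon_{G_\ell}\le\varepsilon_{1:N}$ to bound the spectral gap below by $-2\varepsilon_{1:N}$, so the constant $1-2\epsilon$ appears directly; you instead keep $\varepsilon_{G_\ell}$ and check $1-2\epsilon\le\frac{1-\epsilon}{1+\epsilon}$, which in fact proves the conclusion under the slightly weaker hypothesis $L_F\tau+\eta_\delta<\frac{1-\epsilon}{1+\epsilon}\,\varepsilon_{1:N}$.
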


\begin{remark}[Implication for division-aware learning]
Theorem~\ref{mainthm} formally establishes that free energy grouping can strictly improve the worst-case
approximation guarantee by controlling cross-session inconsistency. When the grouping width is sufficiently
small, structured division replaces global sharing with localized consistency, yielding provably sharper
error bounds.
\end{remark}

\section{Methodology}
\noindent To address the optimization conflicts inherent in class-incremental learning, we propose the Socialized Division and Collaboration (SDC) framework, illustrated in Figure~\ref{fig:sdc}. Rather than forcing all incremental sessions to be absorbed by a single model, SDC reformulates continual learning as a structured system with explicit division of labor and controlled collaboration.

SDC is instantiated through a three-stage design. First, a Helmholtz free energy selector performs principled session division by grouping incoming sessions according to their energetic compatibility, thereby preventing severe cross-session interference at the source. Second, in the decoupled evolution stage, specialized and parameter-efficient modules adapt to their assigned session groups, allowing knowledge to evolve without corrupting unrelated representations. Finally, a collaboration-aware training objective enables structured integration among specialized modules, ensuring that complementary expertise can be jointly leveraged at inference time. Together, these components form a coherent mechanism that mitigates catastrophic forgetting while supporting continual knowledge accumulation.

\noindent\textbf{Helmholtz free energy selector (HFES):}
To resolve optimization conflicts at their origin, we adopt an energy-based assignment mechanism that allows each incoming session to be claimed by the model with which it is most compatible. The core idea is to quantify the alignment between sessions and models through Helmholtz free energy and to derive a principled compatibility score for model selection. For an input $x$ and model $k$, we define the energy of class $y$ as
\begin{equation}
E_k(x,y) = -h_k(x)[y],
\end{equation}
where $h_k(x)$ denotes the logits produced by model $k$. The corresponding Helmholtz free energy is defined as
\begin{equation}
F_k(x) = -\log \sum_{y \in \mathcal{Y}_k} \exp\!\big(-E_k(x,y)\big).
\end{equation}
Since $F_k(x)$ is negative and lower values indicate stronger overall evidence under the induced Gibbs distribution, we introduce a compatibility score defined by
\begin{equation}
S_k(x) \triangleq -F_k(x),
\end{equation}
where larger values correspond to stronger alignment between the session and the model.

Given a session dataset $\mathcal{D}_i$, we evaluate its compatibility with each model by aggregating sample-level scores,
\begin{equation}
S_i^{(k)} = \mathbb{E}_{x \sim \mathcal{D}_i}\!\left[S_k(x)\right],
\end{equation}
and assign the session to the model that maximizes scores,
\begin{equation}
j_i^\star = \arg\max_k S_i^{(k)}.
\end{equation}
The selected model then exclusively updates its parameters on $\mathcal{D}_i$, thereby establishing ownership of the session. As learning proceeds, this energy-based assignment induces stable specialization across models and enforces an implicit division of labor.

\noindent\textbf{Division-aware model evolution (DME):}
After session assignment by the Helmholtz free energy selector, each session is routed to a dedicated expert model and contributes only to the evolution of that expert. To ensure parameter efficiency and prevent catastrophic forgetting, the shared backbone $W_0$ is kept frozen, and adaptation is realized through sequential low-rank updates. For a given expert, model evolution proceeds over sessions $\{1,\dots,N\}$ as
\begin{equation}
W^{(n)} = W_0 + \sum_{i=1}^{n} \Delta W_i ,
\end{equation}
where each update $\Delta W_n = A_n B_n$ is a rank-$r$ matrix learned exclusively from the data of session $n$. Importantly, each $\Delta W_n$ is constrained to capture novel update directions relative to previously learned ones, ensuring that successive low-rank updates complement rather than overwrite earlier knowledge.

This formulation can be viewed as a structured, expert-local sequential low-rank approximation of session-specific update directions. By restricting updates to aligned subspaces within each expert and avoiding cross-expert interference, the model incrementally captures dominant variations induced by new sessions while preserving previously acquired representations. Crucially, this evolution mechanism builds on the stable session grouping induced by free energy-based assignment, ensuring that low-rank updates remain consistent with the expert’s optimization trajectory.

\noindent\textbf{Division-based collaboration (DC):}
The final stage enables system-level collaboration by jointly leveraging the evolved and decoupled expert models. While the preceding stages ensure that sessions are compatibly assigned and experts are independently optimized, this stage governs how specialized expertise is integrated without violating the established division of labor.

\begin{table*}[t]
  \centering
  \small 
  \setlength{\tabcolsep}{2pt} 
  \begin{tabular}{l c@{\,}l c@{\,}l c@{\,}l c@{\,}l c@{\,}l c@{\,}l}
    \toprule
    \multirow{2}{*}{\textbf{Method}} & \multicolumn{4}{c}{\textbf{ImageNet-R}} & \multicolumn{4}{c}{\textbf{CIFAR-100}} & \multicolumn{4}{c}{\textbf{CUB-200}} \\
    \cmidrule(lr){2-5} \cmidrule(lr){6-9} \cmidrule(lr){10-13}
    & ACC (↑) & & AAA (↑) & & ACC (↑) & & AAA (↑) & & ACC (↑) & & AAA (↑) & \\
    \midrule
    Fine-Tuning\cite{Shuttleworth2024LoRAVF} & 60.57 & & 72.31 & & 69.49 & & 80.35 & & 51.43 & & 69.74 & \\
    L2P~\cite{Wang2021LearningTP}            & 71.26 & & 76.13 & & 83.18 & & 87.69 & & 65.18 & & 76.12 & \\
    DualPrompt~\cite{18}                     & 68.22 & & 73.81 & & 81.48 & & 86.41 & & 68.00 & & 79.40 & \\
    CODA-Prompt~\cite{Smith2022CODAPromptCD} & 74.05 & & 78.14 & & 86.31 & & 90.67 & & 71.92 & & 78.76 & \\
    InfLoRA~\cite{Liang2024InfLoRAIL}        & 74.75 & & 80.67 & & 86.75 & & 91.72 & & 70.82 & & 81.39 & \\
    SD-LoRA~\cite{23}                        & \underline{77.34} & & \underline{82.04} & & \underline{88.01} & & \underline{92.54} & & \underline{77.48} & & \underline{85.59} & \\
    \midrule
    \textbf{SDC(Ours)}                       & \textbf{85.96} & \textbf{$_{(+8.62)}$} & \textbf{88.61} & \textbf{$_{(+6.57)}$} & \textbf{95.08} & \textbf{$_{(+7.07)}$} & \textbf{96.58} & \textbf{$_{(+4.04)}$} & \textbf{84.93} & \textbf{$_{(+7.45)}$} & \textbf{90.88} & \textbf{$_{(+5.29)}$} \\
    \bottomrule
  \end{tabular}
  \caption{Performance of different methods on ImageNet-R, CIFAR-100 and CUB-200. The 1$^\mathrm{st}$ and 2$^\mathrm{nd}$ best results are highlighted in \textbf{bold} and \underline{underlined}, respectively.}
  \label{tab:merged_perf_all_datasets}
\end{table*}

Crucially, collaboration is induced during training through a division-preserving objective defined at the system level. Rather than jointly Fine-Tuning all sessions within a single model, we optimize a composite objective that aggregates the losses of all experts:
\begin{equation}
\mathcal{L}_{\text{total}} = \sum_{k=1}^{K} \mathcal{L}_k ,
\end{equation}
where $\mathcal{L}_k$ denotes the empirical loss over sessions assigned to expert $k$. This formulation couples experts only at the objective level, while parameter updates remain strictly localized within each model.

As a result, collaboration emerges naturally at inference time: multiple experts can be jointly activated to produce predictions without parameter merging or post-hoc coordination, yielding a coherent system that balances specialization and collaboration.

\noindent In summary, SDC operates as a unified three-stage pipeline encompassing division, evolution, and collaboration. For clarity, we provide explicit training and inference procedures in Algorithm~\ref{alg:sdc_training} and Algorithm~\ref{alg:sdc_inference}.

\begin{algorithm}[H]
  \caption{Training for SDC.}
  \label{alg:sdc_training}
  \begin{algorithmic}[1]
    \STATE {\bfseries Input:} backbone $W_0$, session dataset $\{\mathcal{D}_i\}_{i=1}^N$, compatibility score function $S(\cdot;\cdot)$, LoRA rank $r$
    \STATE {\bfseries Output:} expert adapters $\{(A_k,B_k)\}_{k=1}^K$
    \STATE Initialize $K$ adapters $\{(A_k,B_k)\}_{k=1}^K$ with rank $r$
    \STATE Initialize empty session ownership sets $\{\mathcal{S}_k\}_{k=1}^K$
    \FOR{$i = 1$ \textbf{to} $N$}
      \STATE Compute scores $\{S_i^{(k)}\}_{k=1}^K$ for session datasets $\mathcal{D}_i$
      \STATE Select expert index $k_i^\star \gets \arg\max_{k \in \{1,\dots,K\}} S_i^{(k)}$
      \STATE Assign session datasets $\mathcal{D}_i$ to expert $k_i^\star$
      \STATE Update adapter $(A_{k_i^\star}, B_{k_i^\star})$ on session dataset $\mathcal{D}_i$
      \STATE Add $\mathcal{D}_i$ to ownership set $\mathcal{S}_{k_i^\star}$
    \ENDFOR
  \end{algorithmic}
\end{algorithm}

\begin{algorithm}[H]
  \caption{Inference for SDC.}
  \label{alg:sdc_inference}
  \begin{algorithmic}[1]
    \STATE {\bfseries Input:} test sample $x$, backbone $W_0$, expert adapters $\{(A_k,B_k)\}_{k=1}^K$, compatibility score function $S(\cdot;\cdot)$
    \STATE {\bfseries Output:} prediction $\hat y$
    \STATE Compute compatibility scores $\{S_k(x)\}_{k=1}^K$
    \STATE Select expert index $k^\star \gets \arg\max_{k \in \{1,\dots,K\}} S_k(x)$
    \STATE Compute prediction $\hat y \gets f(x; W_0 + A_{k^\star}B_{k^\star})$
  \end{algorithmic}
\end{algorithm}

\section{Experiments}
We evaluate Socialized Division and Collaboration (SDC) on three widely adopted benchmarks: ImageNet-R~\cite{Boschini2022TransferWF}, CIFAR-100~\cite{Krizhevsky2009LearningML}, and CUB-200~\cite{2011The}. Beyond reporting overall performance, our experiments are designed to examine how explicit division of labor and structured collaboration influence knowledge evolution and forgetting in CIL. All experiments are implemented in PyTorch and conducted on two NVIDIA RTX 4090 GPUs.

\subsection{Implementation Details}

We compare SDC with several state-of-the-art (SOTA) methods, including Fine-Tuning, L2P~\cite{Wang2021LearningTP}, DualPrompt~\cite{18}, CODA-Prompt~\cite{Smith2022CODAPromptCD}, InfLoRA~\cite{Liang2024InfLoRAIL} and SD-LoRA~\cite{23}. All datasets are divided into 10 incremental sessions, ACC measures accuracy, AAA measures forgetting rate, and the formal definitions of the evaluation metrics can be found in the appendix.More experiments and details of implementation refer to appendix.

\subsection{Division and Collaboration Are All You Need}

As shown in Table~\ref{tab:merged_perf_all_datasets}, SDC achieves the best performance across all three benchmarks. Notably, these improvements hold across diverse domains and class distributions, indicating that the gains are systematic rather than dataset-specific.

\textbf{Neither division nor collaboration alone is sufficient:}
A closer analysis exposes fundamental structural limitations in existing approaches. Single-model methods such as SD-LoRA mitigate catastrophic forgetting by decoupling sessions, yet all updates remain confined within a shared parameter space. As session heterogeneity increases, this constraint inevitably induces representational interference and leads to diminishing returns. Conversely, traditional ensemble methods benefit from collaboration across models, but the lack of explicit session assignment leaves optimization conflicts unresolved during model evolution, which undermines both scalability and stability.

\textbf{Division-aware collaboration resolves both limitations:}
By assigning each session to the most compatible model via a free energy-based criterion, SDC eliminates optimization conflicts at their source. This principled division of labor allows models to specialize without interference, while structured collaboration enables complementary knowledge to be effectively integrated. As a result, SDC delivers substantial and robust performance improvements together with strong suppression of forgetting.

\subsection{Specialization-Complementarity Trade-Off}

As shown in Table~\ref{tab:merged_model_num_singlecol1} and Figure~\ref{fig:method_performance}, our results reveal a clear trade-off between model specialization and inter-session complementarity. On ImageNet-R and CUB-200, increasing the number of models beyond a dataset-dependent threshold leads to higher forgetting rates.
To fundamentally investigate this threshold, we conducted a data-driven semantic clustering analysis on the datasets. The results demonstrate that the underlying feature spaces naturally partition into three highly cohesive macro-clusters, effectively proving that K=3 is the optimal capacity threshold to balance specialization and complementarity.More experiments and details of the semantic clustering analysis refer to appendix~\ref{appendix:k_selection}.
\begin{table}[!h]
  \centering
  \small  
  \setlength{\tabcolsep}{2pt} 
  \begin{tabular}{l c c c c c c}
    \toprule
    \multirow{2}{*}{\textbf{Method}} & \multicolumn{2}{c}{\textbf{ImageNet-R}} & \multicolumn{2}{c}{\textbf{CIFAR-100}} & \multicolumn{2}{c}{\textbf{CUB-200}} \\
    \cmidrule(lr){2-3} \cmidrule(lr){4-5} \cmidrule(lr){6-7}
    & ACC ($\uparrow$) & AAA ($\uparrow$) & ACC ($\uparrow$) & AAA ($\uparrow$) & ACC ($\uparrow$) & AAA ($\uparrow$) \\
    \midrule
    SDC-M2 & 83.99 & 87.02 & 93.36 & 95.87 & 83.94 & \underline{90.26} \\
    SDC-M3 & \textbf{85.96} & \textbf{88.61} & \underline{95.08} & \underline{96.58} & \textbf{84.93} & \textbf{90.88} \\
    SDC-M4 & \underline{85.85} & \underline{88.29} & \textbf{96.13} & \textbf{97.40} & \underline{84.80} & 89.50 \\
    \bottomrule
  \end{tabular}
  \caption{Effect of the number of models on SDC performance. The 1$^\mathrm{st}$ and 2$^\mathrm{nd}$ best results are highlighted in \textbf{bold} and \underline{underlined}, respectively.}
  \label{tab:merged_model_num_singlecol1}
\end{table}

\begin{figure}[htbp!]  
  \centering
  \includegraphics[width=\columnwidth, keepaspectratio]{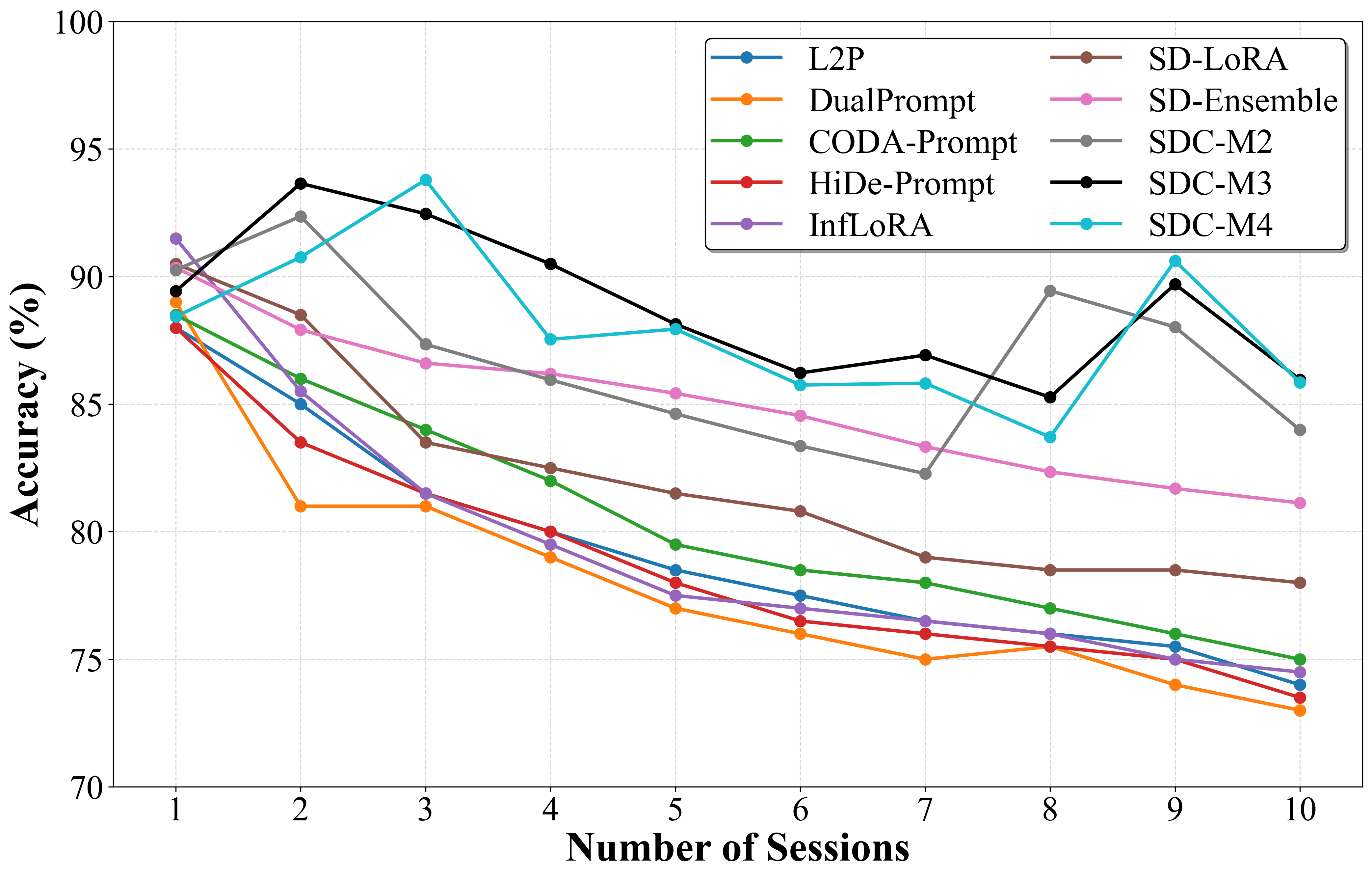}  
  \caption{Detailed accuracy of each session on ImageNet-R.}  
  \label{fig:method_performance}
\end{figure}

\textbf{Session-driven division of labor:} Crucially, in CIL scenarios, data arrives sequentially, requiring the learning system to adapt to the evolving task distribution. Accordingly, the model's capacity ($K$) within our framework is fundamentally task-driven. To theoretically validate this environment-dependent nature, we conducted a retrospective semantic correlation analysis on the complete datasets. This analysis reveals that the underlying feature spaces of the three evaluated benchmarks naturally partition into exactly three highly cohesive macro-clusters. Therefore, the optimal performance observed at $K=3$ is a direct reflection of the specific semantic boundaries inherent to these datasets. Extending this principle to broader real-world applications, the value of $K$ must explicitly correspond to the specific demands of the task scenario. Highly heterogeneous environments require a larger $K$ to isolate disjoint tasks and ensure strict specialization, whereas more homogeneous streams favor a smaller $K$ to prevent the fragmentation of shared knowledge. Ultimately, SDC establishes a flexible division-and-collaboration mechanism, demonstrating that robust continual learning naturally aligns the system's structural capacity with the inherent complexity and semantic diversity of the session stream.

\subsection{Analysis of Division Strategy Effects}

To elucidate the underlying mechanism of SDC, we compare three session division strategies: (i) uniform random division, (ii) imbalanced division (an extreme 1:9 split), and (iii) our free energy-guided division. Comprehensive evaluations across all datasets are detailed in appendix~\ref{ablation study}. The results reveal a consistent pattern: imbalanced and random divisions suffer from pronounced performance degradation due to broken complementarity and unaligned model capacities. In contrast, our division-aware collaboration achieves the strongest performance by assigning each session to the most compatible model.

\textbf{Free energy-guided assignment is essential for division:} The results reveal a clear and consistent pattern. Imbalanced division leads to pronounced performance degradation, indicating that specialization without sufficient complementarity exacerbates catastrophic forgetting. Uniform random division yields only marginal improvements and remains substantially inferior to division-aware collaboration, as it fails to align sessions with appropriate model capacity. In contrast, free energy-guided assignment achieves the strongest performance by allocating each session to the model whose optimization characteristics are most compatible.

\subsection{Ablation Study}
We conduct a series of ablation experiments to disentangle and quantify the individual contributions of the division-collaboration mechanism and the model evolution component in SDC. The results, summarized in Table~\ref{tab:ablation_exp}, provide clear evidence for the necessity of each module.

\begin{table}[htbp]
  \centering
  \setlength{\tabcolsep}{1mm}   
  \small                         
  \begin{tabular}{ccc c@{}l c@{}l}
    \toprule
    \textbf{Method} & \textbf{HFES} & \textbf{DME} & ACC ($\uparrow$) & & AAA ($\uparrow$) & \\
    \midrule
    \multirow{4}{*}{\textbf{SDC}} 
          & $\times$     & $\times$       & 70.62 & & 76.32 & \\
          & $\times$     & $\checkmark$   & 79.86 & & 84.62 & \\
          & $\checkmark$ & $\times$       & \underline{82.15} & & \underline{86.15} & \\
          & $\checkmark$ & $\checkmark$   & \textbf{85.96} & \textbf{$_{(+3.81)}$} & \textbf{88.61} & \textbf{$_{(+2.46)}$} \\
    \bottomrule
  \end{tabular}
  \caption{Ablation study on ImageNet-R. The 1$^\mathrm{st}$ and 2$^\mathrm{nd}$ best results are highlighted in \textbf{bold} and \underline{underlined}, respectively.}
  \label{tab:ablation_exp}
\end{table}

Removing both HFES and DME causes severe degradation, as sessions compete within an undifferentiated parameter space, exacerbating knowledge conflicts. Using only the DME module yields moderate improvements by absorbing new knowledge, but unresolved cross-session optimization conflicts constrain further gains. Conversely, employing only the HFES module substantially improves performance via effective session-to-model matching, yet experts remain suboptimal without localized evolution. Finally, combining both modules delivers the strongest results: HFES aligns sessions with compatible models, while DME ensures full adaptation. Together, they synergistically underpin the superior and stable performance of SDC.

\subsection{Key Factors of CIL}

\begin{figure}[t!]
    \centering
    \begin{subfigure}[b]{0.95\columnwidth}
        \centering
        \includegraphics[width=\linewidth]{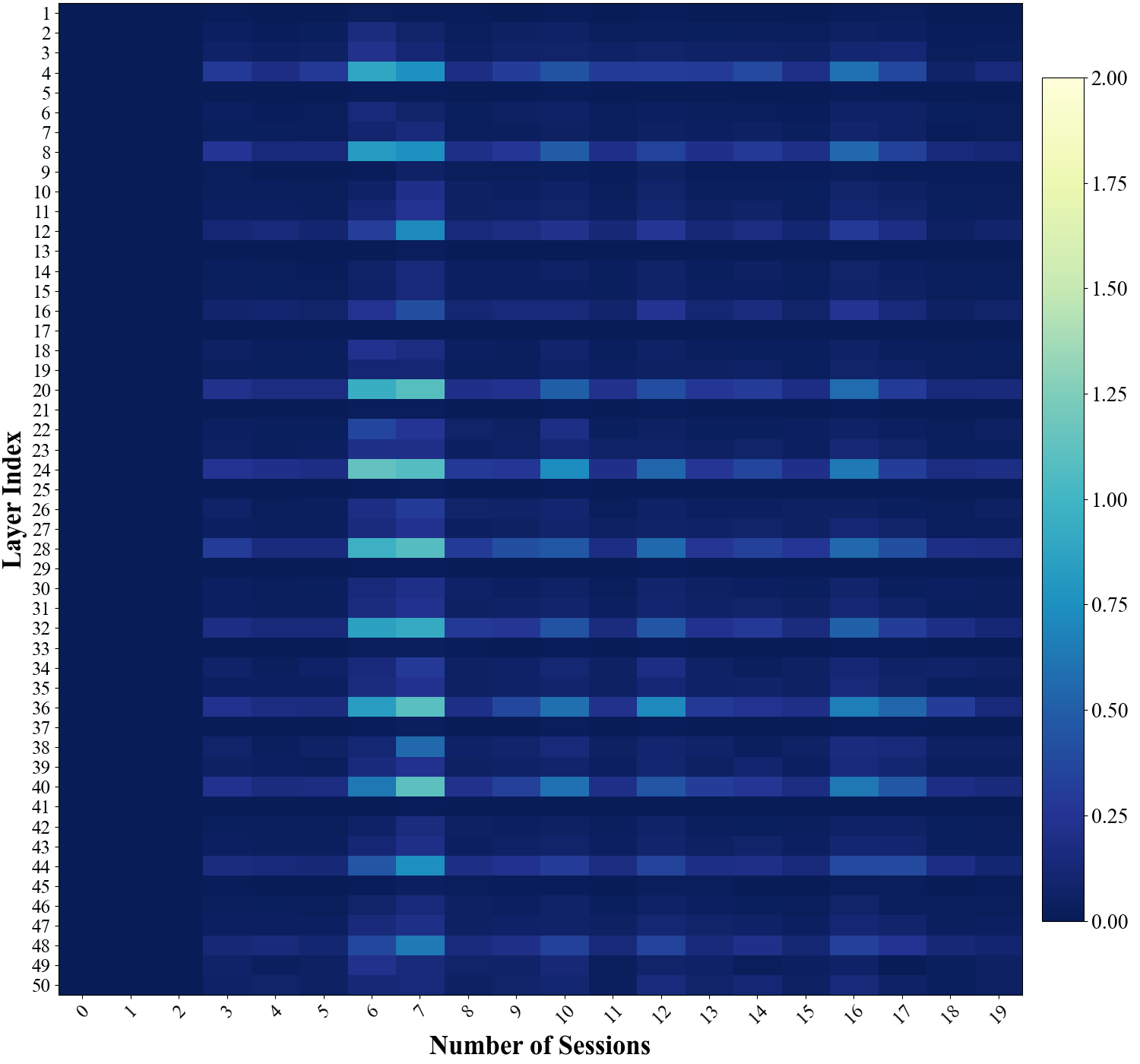}
        \caption{Gradient update of the SDC on the ImageNet-R}
        \label{subfig:left}
    \end{subfigure}
    
    \begin{subfigure}[b]{0.95\columnwidth}
        \centering
        \includegraphics[width=\linewidth]{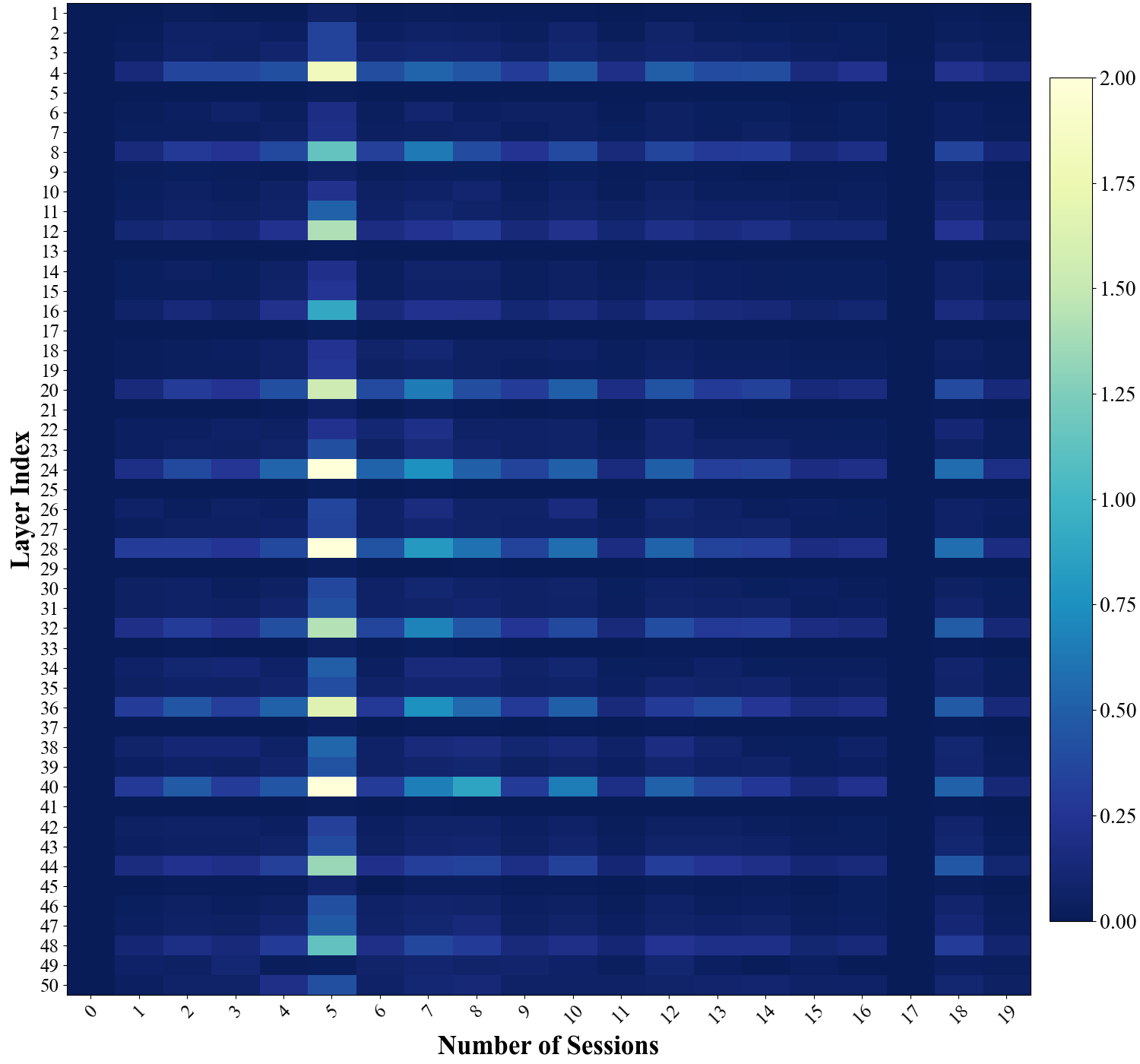}
        \caption{Gradient update of the SD-LoRA on the ImageNet-R}
        \label{subfig:right}
    \end{subfigure}
    
    \caption{Comparing the gradient updates of different methods.}
    \label{fig:two_subfigs}
\end{figure}

To directly examine how optimization conflicts manifest at the parameter-update level, we conduct a comparative analysis on ImageNet-R. Figure~\ref{fig:two_subfigs} visualizes gradient magnitude heatmaps across incremental sessions for SDC (Figure~\ref{subfig:left}) and SD-LoRA (Figure~\ref{subfig:right}), where rows correspond to network layers and columns to incremental sessions.

\textbf{Optimization conflict as a key factor of instability in CIL:}
Figure~\ref{subfig:right} shows that SD-LoRA suffers from sharp, high-magnitude gradient spikes, indicating abrupt parameter corrections driven by cross-session interference and incompatible objectives—a direct precursor to forgetting. Conversely, SDC (Figure~\ref{subfig:left}) exhibits a smooth, uniformly distributed gradient evolution without localized spikes. This confirms that SDC successfully resolves optimization conflicts at their source, enabling stable updates and robust knowledge retention.

\begin{table}[!t]
  \centering
  \small  
  \setlength{\tabcolsep}{1.5pt} 
  \begin{tabular}{@{} l c c c @{}}
    \toprule
           &        & Learnable  & Stored   \\
    Method & GFLOPs & Parameters & Features \\
           &        & (M)        & (M)      \\
    \midrule
    L2P\cite{Wang2021LearningTP}             & 70.14  & 0.48 & 0    \\
    DualPrompt\cite{18}                      & 70.26  & 0.06 & 0    \\
    CODA-Prompt\cite{Smith2022CODAPromptCD}  & 70.61  & 0.38 & 0    \\
    HiDe-Prompt\cite{16}                     & 70.36  & 0.08 & 0.10 \\
    InfLoRA\cite{Liang2024InfLoRAIL}         & 35.12  & 0.37 & 0.10 \\
    SD-LoRA\cite{23}                         & 35.12  & 0.37 & 0    \\
    SDC (Ours)                               & 16.94  & 0.37 & 0    \\
    \bottomrule
  \end{tabular}
  \caption{Comparison of computational cost, learnable parameters, and memory overhead across different methods.}
  \label{tab:model_complexity}
\end{table}

\subsection{Computational Efficiency and Resource Analysis}

We further evaluate the computational cost and resource footprint of different methods, with a detailed comparison reported in Table~\ref{tab:model_complexity}.

\textbf{SDC achieves a favorable accuracy-efficiency balance:}
As demonstrated in Table~\ref{tab:model_complexity}, SDC achieves strong performance with significantly lower computational and storage costs compared to existing methods. Unlike prior continual learning approaches that often sacrifice accuracy for efficiency, SDC avoids this trade-off. By merging division-aware session assignment with lightweight, parameter-efficient model evolution, SDC drastically cuts redundant computation and prevents unnecessary parameter growth. This equilibrium makes SDC highly suitable for resource-constrained, real-time scenarios where both predictive accuracy and operational efficiency are vital. The findings confirm that SDC boosts learning stability and performance while operating with a much smaller computational footprint.
\section{Conclusion}

This work introduces SDC as a practical paradigm for CIL, reframing continual learning from monolithic model updates to a structured system with explicit division of labor and controlled collaboration. From a theoretical perspective, we show that shared low-rank adaptation under heterogeneous sessions is fundamentally constrained by an irreducible inconsistency bottleneck. Crucially, this limitation does not stem from insufficient model capacity, but from conflicting optimization directions across sessions, and can be strictly alleviated through principled division. Guided by this insight, we instantiate SDC with an energy selection coevolution framework, which dynamically assigns sessions to compatible models and enables division-aware collaboration during learning and inference.Our analysis establishes that the model's capacity ($K$) must be dynamically scaled based on the semantic diversity of the session scenario. This design achieves strong empirical performance while preserving parameter efficiency and substantially mitigating forgetting.


\clearpage
\section{Appendix}
The appendix contains comprehensive details on the implementations
and experimental results referenced in the main paper, along with supplementary
theoretical analysis and in-depth discussions. It is organized as follows:
\begin{itemize}
    \item In Implementation Details, we offer a thorough overview of the methods compared in the main paper, accompanied by a detailed description of the datasets utilized.
    \item In Theoretical Analysis, we present the full set of experimental results, accompanied by an in-depth analysis that thoroughly evaluates the model’s performance.
    
    \item In Full Experimental Results, we provide detailed theoretical
proofs for the main results, including assumptions, lemmas, and theorems that formally justify the free energy grouping mechanism and its tighter optimization bounds.
\end{itemize}

\section{Implementation Details}
\label{sec:impl_details}

In this section, we offer a detailed description of the methods compared in the main paper, as well as the datasets used.
For classification as the primary session,the model is trained on 30 epochs using the AdamW optimizer with a batch size of 128. The learning rate is set to $8 \times 10^{-3}$ with a cosine learning rate scheduler and 4 warmup epochs. We applied weight decay of $5 \times 10^{-4}$ and gradient clipping of 1.0. The training process incorporates various data augmentation and regularization techniques, including mixup with a mix scale of 0.2, cutmix with a scale of 0.3, and random erase with a probability of 0.1.

\subsection{Compared Methods}

In this subsection, we provide an overview of the methods compared in the main paper, outlining their key characteristics.
The methods considered are as follows:
\begin{itemize}
    \item \textbf{Fine-Tuning \cite{Shuttleworth2024LoRAVF}:}
    Reveals that LoRA introduces high-rank “intruder dimensions” absent in full Fine-Tuning, showing these dimensions cause localized forgetting and accumulate harmfully in continual learning.
    
    \item \textbf{L2P \cite{Wang2021LearningTP}:}
    introduces dynamic prompt learning to guide pretrained models without session identity or rehearsal, enabling effective session-invariant and session-specific knowledge management for rehearsal-free continual learning.

    \item \textbf{DualPrompt \cite{18}:}
    Learns complementary session-invariant and session-specific prompts to guide pretrained models, achieving strong rehearsal-free continual learning without storing past data.

    \item \textbf{CODA-Prompt \cite{Smith2022CODAPromptCD}:}
    Introduces decomposed attention-based prompting that assembles input-conditioned prompts to enhance plasticity and reduce forgetting in rehearsal-free continual learning.
    
    \item \textbf{HiDe-Prompt \cite{16}:}
    Decomposes prompt-based continual learning into hierarchical components and optimizes them jointly, enabling robust session-specific adaptation and superior performance under self-supervised pretraining.

\begin{figure*}[t] 
    \centering
    \includegraphics[width=0.95\textwidth]{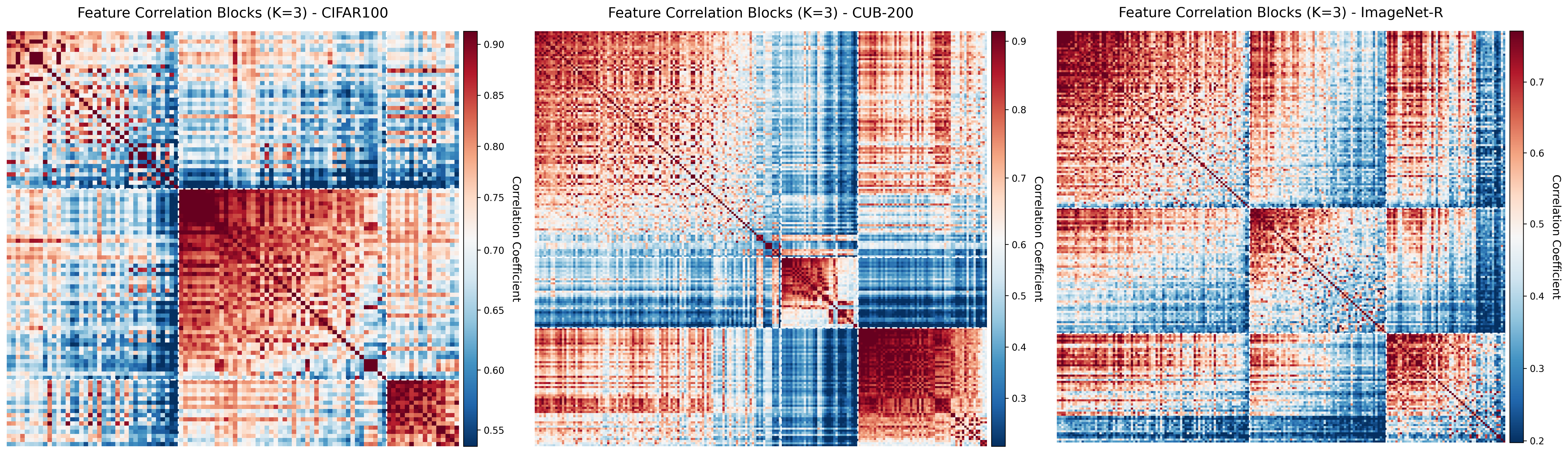} 
    \caption{Data-driven semantic correlation heatmaps of class centroids across CIFAR-100, CUB-200, and ImageNet-R. The distinct block-diagonal patterns (red blocks) naturally emerge under Ward's clustering and Two-Slope normalization, demonstrating that the underlying data inherently partitions into three major semantic groups. This inherent distribution strictly dictates our choice of $K=3$.}
    \label{fig:correlation_heatmaps}
\end{figure*}
    
    \item \textbf{InfLoRA \cite{Liang2024InfLoRAIL}:}
    Interference-Free Low-Rank Adaptation is a parameter-efficient fine-tuning (PEFT) method proposed for continual learning.

    \item \textbf{SD-LoRA \cite{23}:}
    Decouples the magnitude and direction of LoRA updates to achieve scalable, rehearsal-free class-incremental learning, offering strong stability–plasticity trade-offs with high parameter efficiency.

\end{itemize}

\subsection{Datasets}

Our experiments are conducted on three standard class-incremental learning benchmarks, each chosen to represent a different level of inter-class heterogeneity.

\paragraph{CIFAR-100~\cite{Krizhevsky2009LearningML}:}This dataset comprises 60,000 images across 100 classes and is characterized by a high degree of inter-class heterogeneity,therefore, the four models achieved the best results on this dataset.The related results are shown in Table~\ref{tab:merged_model_num_singlecol}. We follow its standard partition, which allocates 50,000 images to the training set and 10,000 images to the testing set.

\paragraph{ImageNet-R~\cite{Boschini2022TransferWF}:}This dataset contains 30,000 artistic renditions (e.g., paintings, sketches) of 200 ImageNet classes, exhibiting relatively low inter-class heterogeneity. The dataset was partitioned using a 9:1 ratio, resulting in a training set of 27,000 images and a testing set of 3,000 images.

\paragraph{CUB-200~\cite{2011The}:}The Caltech-UCSD Birds-200-2011 (CUB-200) dataset consists of 11,788 images across 200 fine-grained bird species, leading to very low inter-class variance. Following the same 9:1 split methodology, we divided the dataset into 10,610 training images and 1,178 testing images.

\subsection{Semantic Correlation Analysis of Datasets}

\label{appendix:k_selection}

To determine the optimal value of K, we conducted a data-driven analysis on the inherent semantic structures of ImageNet-R, CUB-200, and CIFAR-100.

\textbf{Methodology:} We extracted high-dimensional semantic features for all classes using a pre-trained ResNet-50 and computed the L2-normalized class centroids. The inter-class semantic alignments were measured using cosine similarity. We then applied Agglomerative Hierarchical Clustering with Ward's minimum variance method to uncover the latent structural groupings within the feature space.

\textbf{Data-Driven Observations:} As shown in Figure~\ref{fig:correlation_heatmaps}, we present the correlation heatmaps for the three datasets. To effectively highlight fine-grained inter-class relations, the colormap is dynamically normalized using a Two-Slope normalization centered at the median of off-diagonal correlations. The reordered correlation matrices distinctly exhibit three dominant, highly cohesive semantic macro-clusters (highlighted by the dense red block-diagonal structures).

\textbf{Conclusion:} The visualizations clearly demonstrate that on these three datasets,the data naturally partitions into three major semantic groups. Driven strictly by this inherent data distribution, we established K=3 as the optimal macro-cluster quantity for our experiments.

\subsection{Metrics}
\paragraph{ACC:} It metric measures the overall performance by computing the average accuracy across all $N$ sessions upon the completion of CIL:
\begin{equation}
\text{ACC} = \frac{1}{N} \sum_{k=1}^{N} \text{ACC}(S_k).
\label{eq:acc}
\end{equation}
\paragraph{AAA:}It further accumulates the average accuracy of all encountered sessions after training on each new session. Mathematically, it is defined as the mean of step-wise average accuracies across all training stages:
\begin{equation}
\text{AAA} = \frac{1}{N} \sum_{k=1}^{N} \left( \frac{1}{k} \sum_{i=1}^{k} \text{ACC}_k(S_i) \right).
\label{eq:aaa}
\end{equation}
where $S = \{S_1, S_2, ..., S_N\}$ denotes the complete set of $N$ sessions in class-incremental learning (CIL), $\text{ACC}(S_k)$ represents the accuracy on the $k$-th session $S_k$ after finishing all $N$ sessions, and $\text{ACC}_k(S_i)$ denotes the accuracy on the $i$-th session $S_i$ (1 $\leq$ $i$ $\leq$ $k$) after training on the first $k$ sessions.

\

\section{Theoretical Analysis}
\label{app:proofs_free_energy}

Below, we collect additional assumptions and auxiliary lemmas that serve as technical ingredients for the analysis in Section~\ref{sec:ch3}.

\begin{assumption}
\label{assump:sdlora}
\label{assump:dispersion} 
For any session subset $S\subseteq\{1,\ldots,N\}$,
there exists a trained low-rank approximation $\Delta W_{S}^{[:j_S]}$ such that for any $i\in S$,
\begin{equation}
\left\|A_iB_i-\Delta W_{S}^{[:j_S]}\right\|_{\op}
\ \le\ \epsilon\,\sigma_1(\Delta W_S^\star)\ +\ \varepsilon_S,
\label{eq:sdlora_bound}
\end{equation}
where $\sigma_1(\cdot)$ denotes the largest singular value and
\begin{equation}
\varepsilon_S
:= \min_{X}\ \max_{k\in S}\ \left\|\Delta W_k^\star - X\right\|_{\op}.
\label{eq:cheb_radius}
\end{equation}
Moreover, the accuracy parameter satisfies
\begin{equation}
\epsilon \ \le\ \frac{1}{m+n+r},
\label{eq:epsilon_constraint}
\end{equation}
where $r$ is the LoRA rank and $(m,n)$ are the matrix dimensions.
\end{assumption}

\begin{assumption}
\label{ass:residual_calibration}
Fix a confidence level $1-\delta$.
Let the quantile residual $\eta_\delta$ be defined at level $1-\delta$.
We assume that
\begin{equation}
\eta_\delta < (1-2\epsilon)\,\varepsilon_{1:N}.
\label{eq:residual_calibration}
\end{equation}
\end{assumption}

\begin{lemma}
\label{lem:radius_le_diameter}
For any finite set $\{\Delta W_i^\star\}_{i\in S}$ under $\|\cdot\|_{\op}$, we have
\begin{equation}
\varepsilon_S \le \mathrm{diam}(S),
\end{equation}
where
\begin{equation}
\mathrm{diam}(S) := \max_{p,q\in S} \|\Delta W_p^\star - \Delta W_q^\star\|_{\op}.
\label{eq:diam_def}
\end{equation}
\begin{equation}
\varepsilon_S := \min_{X} \max_{i\in S} \|\Delta W_i^\star - X\|_{\op}.
\label{eq:radius_def}
\end{equation}
\end{lemma}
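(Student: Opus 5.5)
The plan is to prove the inequality by exhibiting a single admissible point $X$ in the minimization that defines $\varepsilon_S$. Since $\varepsilon_S$ is a minimum over all $X$, it is bounded above by $\max_{i\in S}\|\Delta W_i^\star - X\|_{\op}$ for any particular choice of $X$. The natural choice is one of the data points themselves: fix an arbitrary index $p\in S$ (the set is finite and, we assume, nonempty) and take $X=\Delta W_p^\star$.

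With this choice,
\begin{equation*}
\varepsilon_S \;\le\; \max_{i\in S}\|\Delta W_i^\star-\Delta W_p^\star\|_{\op}.
\end{equation*}
Each term on the right is the distance between two points of the set. It is therefore at most
\begin{equation*}
\max_{p',q\in S}\|\Delta W_{p'}^\star-\Delta W_q^\star\|_{\op}=\mathrm{diam}(S),
\end{equation*}
because the pair $(i,p)$ is among the pairs over which the diameter maximizes. Chaining these two inequalities gives $\varepsilon_S\le \mathrm{diam}(S)$.

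The only point requiring care is that the minimum defining $\varepsilon_S$ is attained, so that writing ``$\min$'' rather than ``$\inf$'' is legitimate. For this, note that $X\mapsto\max_{i\in S}\|\Delta W_i^\star-X\|_{\op}$ is continuous and coercive on the finite-dimensional matrix space, so a minimizer exists. Even if one reads the definition as an infimum, the argument above is unchanged, since it only uses that $\varepsilon_S$ is a lower bound over all $X$.

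There is no genuine obstacle. If desired, one could also remark that the sharper bound $\varepsilon_S\ge \tfrac12\mathrm{diam}(S)$ follows from the triangle inequality, but that direction is not needed here.
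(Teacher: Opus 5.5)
Your proposal is correct and follows essentially the same argument as the paper: take $X=\Delta W_{i_0}^\star$ for some fixed $i_0\in S$ as a feasible center, then bound the maximum distance to that point by the diameter. Your added remark on attainment of the minimum (continuity plus coercivity in finite dimensions) supplies a small detail the paper leaves implicit.
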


\begin{proof}
Fix $i_0\in S$ and choose $X = \Delta W_{i_0}^\star$. Then
\begin{equation}
\begin{aligned}
\max_{i\in S} \|\Delta W_i^\star - X\|_{\op}
&= \max_{i\in S} \|\Delta W_i^\star - \Delta W_{i_0}^\star\|_{\op} \\
&\le \max_{p,q\in S} \|\Delta W_p^\star - \Delta W_q^\star\|_{\op} \\
&= \mathrm{diam}(S).
\end{aligned}
\end{equation}
Taking the minimum over centers $X$ yields the desired result.
\end{proof}

\begin{lemma}
\label{lem:width_controls_diam_hp}
Let $S$ satisfy the within-set width condition
\begin{equation}
\forall p,q\in S,\quad |F_p-F_q|\le \tau.
\label{eq:width_condition}
\end{equation}
Under Assumption~\ref{ass:lipschitz_hp}, there exists an event $\mathcal{E}_S$ such that
\begin{equation}
\Pr(\mathcal{E}_S)\ \ge\ 1-\delta_S,
\qquad
\delta_S := \binom{|S|}{2}\,\delta,
\label{eq:ES_prob}
\end{equation}
and on $\mathcal{E}_S$,
\begin{equation}
\mathrm{diam}(S)\ \le\ L_F\tau+\eta_\delta.
\label{eq:diam_bound_on_ES}
\end{equation}
Consequently, on $\mathcal{E}_S$ one has
\begin{equation}
\varepsilon_S \ \le\ \mathrm{diam}(S)
\ \le\ L_F\tau+\eta_\delta.
\label{eq:radius_diam_tau_bound}
\end{equation}
\end{lemma}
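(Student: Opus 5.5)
The plan is a union bound over the pairs of $S$, followed by Lemma~\ref{lem:radius_le_diameter}. For each unordered pair $\{p,q\}\subseteq S$ with $p\neq q$, let $\mathcal{A}_{pq}$ be the event inside the probability in Assumption~\ref{ass:lipschitz_hp}:
\begin{equation*}
\|\Delta W_p^\star-\Delta W_q^\star\|_{\op}\le L_F|F_p-F_q|+\eta_\delta .
\end{equation*}
Then define $\mathcal{E}_S:=\bigcap_{\{p,q\}\subseteq S,\,p\neq q}\mathcal{A}_{pq}$. Each complement $\mathcal{A}_{pq}^c$ has probability at most $\delta$, and there are exactly $\binom{|S|}{2}$ unordered distinct pairs. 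A union bound therefore gives $\Pr(\mathcal{E}_S^c)\le\binom{|S|}{2}\delta=\delta_S$, which is \eqref{eq:ES_prob}. No independence between the pairwise events is needed, since only subadditivity is used. If $|S|\le1$, the intersection is empty, $\mathcal{E}_S$ is the sure event, and $\delta_S=0$, so the statement holds trivially with diameter $0$.

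Next, on $\mathcal{E}_S$ I bound each pairwise distance. For $p\neq q$ in $S$, the event $\mathcal{A}_{pq}$ together with the width condition \eqref{eq:width_condition} gives
\begin{equation*}
\|\Delta W_p^\star-\Delta W_q^\star\|_{\op}\le L_F\tau+\eta_\delta,
\end{equation*}
using $L_F>0$ so that multiplying the inequality $|F_p-F_q|\le\tau$ by $L_F$ preserves it. For $p=q$ the distance is $0\le L_F\tau+\eta_\delta$, because $\tau\ge0$ (as $|F_p-F_p|=0\le\tau$) and $\eta_\delta\ge0$. Taking the maximum over all $p,q\in S$ in \eqref{eq:diam_def} yields \eqref{eq:diam_bound_on_ES}. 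The consequence \eqref{eq:radius_diam_tau_bound} then follows by chaining this with Lemma~\ref{lem:radius_le_diameter}, which holds deterministically and so in particular on $\mathcal{E}_S$.

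The only point needing care is a subtlety in how Assumption~\ref{ass:lipschitz_hp} is read. The scores $F_p$ and the residual $\eta_\delta$ must be fixed, or at least measurable, so that the width condition and the pairwise events sit on the same probability space. One should also count unordered pairs rather than ordered ones so that the constant is $\binom{|S|}{2}$ and not $|S|^2$. I expect this bookkeeping, rather than any estimate, to be the main thing to state carefully.
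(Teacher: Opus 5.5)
Your proposal is correct and matches the paper's proof step for step. Like the paper, you define the pairwise events from Assumption~\ref{ass:lipschitz_hp}, intersect them, and union-bound over the $\binom{|S|}{2}$ unordered pairs. You then bound each pairwise distance by $L_F\tau+\eta_\delta$ using the width condition, take the maximum to get the diameter bound, and invoke Lemma~\ref{lem:radius_le_diameter}. Your explicit treatment of the diagonal $p=q$ and the degenerate case $|S|\le 1$ is harmless bookkeeping that the paper leaves implicit.
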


\begin{proof}
For each unordered pair $\{p,q\}\subseteq S$, define the event
\begin{equation}
\mathcal{E}_{p,q}
:=
\left\{
\|\Delta W_p^\star-\Delta W_q^\star\|_{\op}
\le
L_F\,|F_p-F_q|+\eta_\delta
\right\}.
\label{eq:Epq_def}
\end{equation}
By Assumption~\ref{ass:lipschitz_hp}, $\Pr(\mathcal{E}_{p,q})\ge 1-\delta$.

Let
\begin{equation}
\mathcal{E}_S
:=
\bigcap_{\{p,q\}\subseteq S}\mathcal{E}_{p,q}.
\label{eq:ES_def}
\end{equation}
By the union bound,
\begin{equation}
\Pr(\mathcal{E}_S)
\ge
1-\sum_{\{p,q\}\subseteq S}\Pr(\mathcal{E}_{p,q}^c)
\ge
1-\binom{|S|}{2}\delta,
\label{eq:union_bound_ES}
\end{equation}
which proves~\eqref{eq:ES_prob}.

On $\mathcal{E}_S$, for all $p,q\in S$,
\begin{equation}
\begin{aligned}
\|\Delta W_p^\star-\Delta W_q^\star\|_{\op}
&\le
L_F\,|F_p-F_q|+\eta_\delta \\
&\le
L_F\tau+\eta_\delta,
\end{aligned}
\label{eq:pairwise_bound}
\end{equation}
where the last inequality uses~\eqref{eq:width_condition}. Taking the maximum over $p,q\in S$ yields
\eqref{eq:diam_bound_on_ES}. Finally, \eqref{eq:radius_diam_tau_bound} follows from
Lemma~\ref{lem:radius_le_diameter}.
\end{proof}

\begin{lemma}
\label{lem:spectral_gap_condition}
Under the assumptions of Lemma~\ref{lem:group_hp},
if
\begin{equation}
L_F \tau + \eta_\delta
<
\varepsilon_{1:N}
+
\epsilon\Bigl(
  \sigma_1\bigl(\Delta W_{1:N}^\star\bigr)
  -
  \max_\ell \sigma_1\bigl(\Delta W_{G_\ell}^\star\bigr)
\Bigr),
\label{eq:spectral_gap_condition}
\end{equation}
then the strict improvement condition~\eqref{tighter_bound} holds; that is, group-wise
training yields a strictly tighter worst-case upper bound than sharing
a single adapter across all sessions.
\end{lemma}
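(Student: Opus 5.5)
We prove Lemma~\ref{lem:spectral_gap_condition} by rewriting the desired inequality~\eqref{tighter_bound} directly, working on the high-probability event where the group-level bounds hold. Write $\sigma^\star := \sigma_1(\Delta W_{1:N}^\star)$ and $\bar\sigma := \max_\ell \sigma_1(\Delta W_{G_\ell}^\star)$. We place ourselves on $\mathcal{E} := \bigcap_\ell \mathcal{E}_{G_\ell}$, where $\mathcal{E}_{G_\ell}$ is the event furnished by Lemma~\ref{lem:width_controls_diam_hp} applied with $S = G_\ell$. A union bound gives $\Pr(\mathcal{E}) \ge 1-\sum_\ell \delta_\ell$. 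On $\mathcal{E}$, Lemma~\ref{lem:width_controls_diam_hp} gives $\varepsilon_{G_\ell} \le L_F\tau + \eta_\delta$ for every $\ell$; this uses the width condition of Definition~\ref{def:free_energy_grouping}, read as $|F_p-F_q|\le\tau$ within each group.

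\textbf{Step 1 (bound the grouped side).} For each $\ell$,
\[
\epsilon\,\sigma_1(\Delta W_{G_\ell}^\star)+\varepsilon_{G_\ell} \le \epsilon\,\bar\sigma + L_F\tau+\eta_\delta ,
\]
since $\sigma_1(\Delta W_{G_\ell}^\star)\le\bar\sigma$ by the definition of $\bar\sigma$ and $\varepsilon_{G_\ell}$ is bounded as above. Taking the maximum over $\ell$ gives
\[
\max_\ell\bigl(\epsilon\,\sigma_1(\Delta W_{G_\ell}^\star)+\varepsilon_{G_\ell}\bigr) \le \epsilon\,\bar\sigma + L_F\tau+\eta_\delta .
\]
This step is purely algebraic once Lemma~\ref{lem:width_controls_diam_hp} is invoked groupwise.

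\textbf{Step 2 (use the hypothesis).} Hypothesis~\eqref{eq:spectral_gap_condition} says exactly that $L_F\tau+\eta_\delta < \varepsilon_{1:N} + \epsilon(\sigma^\star-\bar\sigma)$. Adding $\epsilon\,\bar\sigma$ to both sides yields
\[
\epsilon\,\bar\sigma + L_F\tau+\eta_\delta < \epsilon\,\sigma^\star+\varepsilon_{1:N}.
\]
Chaining this with Step 1 gives~\eqref{tighter_bound}. The right-hand side is precisely the global bound~\eqref{eq:global_worst} from Lemma~\ref{lem:group_hp}, obtained by Assumption~\ref{assump:sdlora} with $S=\{1,\dots,N\}$. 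The grouped bounds~\eqref{lemma16} come from the same assumption applied to each $G_\ell$, so the comparison is between the two upper bounds, which is what the statement asserts.

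\textbf{Main obstacle: interpreting the probabilistic qualifier.} The only delicate point is that the inequality on $\varepsilon_{G_\ell}$ holds only on $\mathcal{E}$, not deterministically. I will state the conclusion as holding on $\mathcal{E}$, with probability at least $1-\sum_\ell\binom{|G_\ell|}{2}\delta$. This bound is itself at least $1-\binom{N}{2}\delta$, because the within-group pairs form a subset of all pairs.

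I will also note that no sign assumption on $\sigma^\star-\bar\sigma$ is needed. If the gap is negative, the hypothesis simply becomes more demanding, and the algebra above is unchanged.
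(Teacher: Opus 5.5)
Your proof is correct and follows the paper's argument. You bound each grouped term by $\epsilon\max_\ell\sigma_1(\Delta W_{G_\ell}^\star)+L_F\tau+\eta_\delta$ using the groupwise radius bound, then add $\epsilon\max_\ell\sigma_1(\Delta W_{G_\ell}^\star)$ to both sides of the hypothesis to obtain the strict inequality. Your explicit use of the intersection event $\bigcap_\ell\mathcal{E}_{G_\ell}$, with probability at least $1-\sum_\ell\binom{|G_\ell|}{2}\delta\ge 1-\binom{N}{2}\delta$, is a valid refinement that the paper leaves implicit.
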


\begin{proof}
By Lemma~\ref{lem:group_hp}, for any group $G_\ell$ we have
\begin{equation}
\begin{aligned}
\max_{i\in G_\ell}
\bigl\|A_i B_i - \Delta W_{G_\ell}^{[:j_\ell]}\bigr\|_{\op}
&\le
\epsilon\,\sigma_1\bigl(\Delta W_{G_\ell}^\star\bigr)
+ \varepsilon_{G_\ell}, \\
\varepsilon_{G_\ell} &\le L_F \tau + \eta_\delta.
\end{aligned}
\label{eq:group_local_bound}
\end{equation}

Hence
\begin{equation}
\begin{aligned}
\lefteqn{
\max_\ell \max_{i\in G_\ell}
\bigl\|A_i B_i - \Delta W_{G_\ell}^{[:j_\ell]}\bigr\|_{\op}
}
\\
\le\ &
\max_\ell
\Bigl(
  \epsilon\,\sigma_1\bigl(\Delta W_{G_\ell}^\star\bigr)
  + \varepsilon_{G_\ell}
\Bigr)
\\
\le\ &
\epsilon\,\max_\ell \sigma_1\bigl(\Delta W_{G_\ell}^\star\bigr)
+ (L_F \tau + \eta_\delta).
\end{aligned}
\label{eq:group_bound_cor}
\end{equation}

On the other hand, without grouping we have
\begin{equation}
\max_{1\le i\le N}
\bigl\|A_i B_i - \Delta W_{1:N}^{[:j]}\bigr\|_{\op}
\le
\epsilon\,\sigma_1\bigl(\Delta W_{1:N}^\star\bigr)
+ \varepsilon_{1:N}.
\label{eq:global_bound_cor}
\end{equation}
If condition~\eqref{eq:spectral_gap_condition} holds, then
\begin{equation}
\epsilon\,\max_\ell \sigma_1\bigl(\Delta W_{G_\ell}^\star\bigr)
+ (L_F \tau + \eta_\delta)
<
\epsilon\,\sigma_1\bigl(\Delta W_{1:N}^\star\bigr)
+ \varepsilon_{1:N}.
\end{equation}
Combining \eqref{eq:group_bound_cor} and \eqref{eq:global_bound_cor},
we see that the grouped worst-case bound is strictly smaller than the
ungrouped worst-case bound, which is exactly condition~\eqref{tighter_bound}.
This completes the proof.
\end{proof}

\begin{lemma}
\label{lem:spectral_gap_lower}
Under the spectral norm, we always have
\begin{equation}
\begin{aligned}
\sigma_{1}\!\left(\Delta W_{1:N}^{\star}\right)
-\max_{\ell}\sigma_{1}\!\left(\Delta W_{G_{\ell}}^{\star}\right)
&\ge
-\varepsilon_{1:N}-\max_{\ell}\varepsilon_{G_{\ell}} \\
&\ge
-2\varepsilon_{1:N}.
\end{aligned}
\label{eq:spectral_gap_lower}
\end{equation}

\end{lemma}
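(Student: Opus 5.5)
We prove the inequality $\sigma_1(\Delta W_{1:N}^\star)-\max_\ell\sigma_1(\Delta W_{G_\ell}^\star)\ge -\varepsilon_{1:N}-\max_\ell\varepsilon_{G_\ell}$ by a triangle-inequality argument through a common session. Two facts do the work. First, $\sigma_1(\cdot)=\|\cdot\|_{\op}$ is a norm, so $|\sigma_1(X)-\sigma_1(Y)|\le\|X-Y\|_{\op}$ (Weyl/reverse triangle inequality). Second, every Chebyshev center lies within its radius of each member of its set: by Definition~\ref{def:chebyshev}, $\|\Delta W_i^\star-\Delta W_S^\star\|_{\op}\le\varepsilon_S$ for all $i\in S$.

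\emph{Step 1.} Fix the group $\ell^\ast$ attaining $\max_\ell\sigma_1(\Delta W_{G_\ell}^\star)$. We assume groups are nonempty, which is implicit in the partition.

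\emph{Step 2.} Pick any session $i\in G_{\ell^\ast}$. Since $i\in\{1,\ldots,N\}$ as well, $i$ belongs both to $G_{\ell^\ast}$ and to the full set, so
\begin{equation*}
\|\Delta W_{1:N}^\star-\Delta W_{G_{\ell^\ast}}^\star\|_{\op}
\le \|\Delta W_{1:N}^\star-\Delta W_i^\star\|_{\op}+\|\Delta W_i^\star-\Delta W_{G_{\ell^\ast}}^\star\|_{\op}
\le \varepsilon_{1:N}+\varepsilon_{G_{\ell^\ast}}.
\end{equation*}

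\emph{Step 3.} Apply the reverse triangle inequality:
\begin{equation*}
\sigma_1(\Delta W_{1:N}^\star)\ge\sigma_1(\Delta W_{G_{\ell^\ast}}^\star)-\varepsilon_{1:N}-\varepsilon_{G_{\ell^\ast}}.
\end{equation*}
Since $\varepsilon_{G_{\ell^\ast}}\le\max_\ell\varepsilon_{G_\ell}$, this gives the first inequality.

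\emph{Step 4.} For the second inequality, show $\varepsilon_{G_\ell}\le\varepsilon_{1:N}$ for each $\ell$. This is monotonicity of the Chebyshev radius under inclusion: for any center $X$, $\max_{i\in G_\ell}\|\Delta W_i^\star-X\|_{\op}\le\max_{1\le i\le N}\|\Delta W_i^\star-X\|_{\op}$, and taking the minimum over $X$ on both sides preserves the inequality. Substituting into Step 3 gives $\ge -2\varepsilon_{1:N}$.

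The argument is elementary. The one point to handle carefully is the notational convention that $\Delta W_S^\star$ means the Chebyshev center of $S$, which we need in order to use the bound $\|\Delta W_i^\star-\Delta W_S^\star\|_{\op}\le\varepsilon_S$. The minimum defining the center is attained, since the objective is a continuous coercive function on a finite-dimensional space, so a center exists. If it is not unique, any choice works, because the argument uses only this bound.
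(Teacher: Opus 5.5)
Your proof is correct and uses the same ingredients as the paper's proof: each Chebyshev center lies within its radius of every member of its set, the (reverse) triangle inequality for $\|\cdot\|_{\op}$, and monotonicity $\varepsilon_{G_\ell}\le\varepsilon_{1:N}$, which you justify and the paper only asserts. The one difference is the pivot. The paper compares both centers' norms to the scalar $\max_k\|\Delta W_k^\star\|_{\op}$: it bounds $\|\Delta W_{G_\ell}^\star\|_{\op}$ above by that scalar plus $\varepsilon_{G_\ell}$, and $\|\Delta W_{1:N}^\star\|_{\op}$ below by that scalar minus $\varepsilon_{1:N}$. You instead route through a single shared session to get $\|\Delta W_{1:N}^\star-\Delta W_{G_\ell}^\star\|_{\op}\le\varepsilon_{1:N}+\varepsilon_{G_\ell}$ directly, which is slightly more direct and also yields the two-sided estimate.
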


\begin{proof}
Recall that $\sigma_{1}(W)=\|W\|_{\op}$.
For any group $G_{\ell}$ and any $i\in G_{\ell}$, by the triangle inequality,
\begin{equation}
\begin{aligned}
\|\Delta W_{G_{\ell}}^{\star}\|_{\op}
&\le
\|\Delta W_{i}^{\star}\|_{\op}
+
\|\Delta W_{i}^{\star}-\Delta W_{G_{\ell}}^{\star}\|_{\op} \\
&\le
\max_{k\in G_{\ell}}\|\Delta W_{k}^{\star}\|_{\op}
+
\varepsilon_{G_{\ell}}.
\end{aligned}
\label{eq:tri_upper_group}
\end{equation}

Taking the maximum over $\ell$ on both sides yields
\begin{equation}
\max_{\ell}\left\|\Delta W_{G_{\ell}}^{\star}\right\|_{\op}
\le
\max_{1\le k\le N}\left\|\Delta W_{k}^{\star}\right\|_{\op}
+\max_{\ell}\varepsilon_{G_{\ell}}.
\label{eq:max_over_groups}
\end{equation}
Therefore,
\begin{equation}
\begin{aligned}
\lefteqn{
\left\|\Delta W_{1:N}^{\star}\right\|_{\op}
-\max_{\ell}\left\|\Delta W_{G_{\ell}}^{\star}\right\|_{\op}
}
\\
\ge\ &
\left\|\Delta W_{1:N}^{\star}\right\|_{\op}
-\max_{1\le k\le N}\left\|\Delta W_{k}^{\star}\right\|_{\op}
\\
&-\max_{\ell}\varepsilon_{G_{\ell}}.
\end{aligned}
\label{eq:gap_decompose}
\end{equation}

For any $k$, the reverse triangle inequality gives
\begin{equation}
\left\|\Delta W_{1:N}^{\star}\right\|_{\op}
-\left\|\Delta W_{k}^{\star}\right\|_{\op}
\ge
-\left\|\Delta W_{1:N}^{\star}-\Delta W_{k}^{\star}\right\|_{\op},
\label{eq:reverse_tri}
\end{equation}
hence
\begin{equation}
\begin{aligned}
\lefteqn{
\left\|\Delta W_{1:N}^{\star}\right\|_{\op}
-\max_{1\le k\le N}\left\|\Delta W_{k}^{\star}\right\|_{\op}
}
\\
\ge\ &
-\max_{1\le k\le N}\left\|\Delta W_{1:N}^{\star}-\Delta W_{k}^{\star}\right\|_{\op}
\\
=\ &
-\varepsilon_{1:N}.
\end{aligned}
\label{eq:center_point_gap}
\end{equation}

Substituting \eqref{eq:center_point_gap} into \eqref{eq:gap_decompose} yields
\begin{equation}
\left\|\Delta W_{1:N}^{\star}\right\|_{\op}
-\max_{\ell}\left\|\Delta W_{G_{\ell}}^{\star}\right\|_{\op}
\ge
-\varepsilon_{1:N}-\max_{\ell}\varepsilon_{G_{\ell}}.
\label{eq:gap_lb_1}
\end{equation}
Finally, since $G_{\ell}\subseteq\{1,\ldots,N\}$, we have $\varepsilon_{G_{\ell}}\le \varepsilon_{1:N}$ for all $\ell$,
and thus
\begin{equation}
\left\|\Delta W_{1:N}^{\star}\right\|_{\op}
-\max_{\ell}\left\|\Delta W_{G_{\ell}}^{\star}\right\|_{\op}
\ge
-2\varepsilon_{1:N}.
\label{eq:gap_lb_2}
\end{equation}
Converting $\|\cdot\|_{\op}$ back to $\sigma_{1}(\cdot)$ completes the proof.
\end{proof}

Below, we provide the omitted proofs from Sections ~\ref{sec:ch3}.

\subsection{Proof of Lemma~\ref{lem:group_hp}}
\begin{proof}
Fix any group index $\ell$.
Applying Assumption~\ref{assump:sdlora} with $S=G_\ell$ yields that for every $i\in G_\ell$,
\begin{equation}
\bigl\|A_iB_i-\Delta W_{G_\ell}^{[:j_\ell]}\bigr\|_{\op}
\le
\epsilon\,\sigma_1\!\bigl(\Delta W_{G_\ell}^{\star}\bigr)
+\varepsilon_{G_\ell}.
\end{equation}
Taking the maximum over $i\in G_\ell$ gives the first inequality in~\eqref{lemma16}.

Next, by the width condition and Lemma~\ref{lem:width_controls_diam_hp}
(applied to $S=G_\ell$), on the event $\mathcal{E}_{G_\ell}$ we have
\begin{equation}
\varepsilon_{G_\ell}\le L_F\tau+\eta_\delta,
\end{equation}
which implies the second inequality in~\eqref{lemma16}.

Similarly, applying Assumption~\ref{assump:sdlora} with $S=\{1,\ldots,N\}$ and taking the maximum over
$1\le i\le N$ yields the corresponding global bound.
\end{proof}

\subsection{Proof of Theorem~\ref{mainthm}}
\begin{proof}
By the lower bound in Lemma~\ref{lem:spectral_gap_lower}, see \eqref{eq:spectral_gap_lower}, we have
\begin{equation}
\begin{aligned}
\varepsilon_{1:N}
+ \epsilon\Bigl(\sigma_1(\Delta W_{1:N}^\star)
- \max_{\ell}\sigma_1(\Delta W_{G_\ell}^\star)\Bigr)
&\ge \varepsilon_{1:N} - 2\epsilon\,\varepsilon_{1:N} \\
&= (1-2\epsilon)\,\varepsilon_{1:N}.
\end{aligned}
\end{equation}
If condition \eqref{thm8} holds, then it necessarily follows that
\begin{equation}
L_F \tau + \eta_\delta
<
\varepsilon_{1:N}
+ \epsilon\Bigl(\sigma_1(\Delta W_{1:N}^\star)
- \max_{\ell}\sigma_1(\Delta W_{G_\ell}^\star)\Bigr),
\end{equation}
which coincides with the requirement of Lemma~\ref{lem:spectral_gap_condition}, namely condition~\eqref{eq:spectral_gap_condition}.
Therefore the strict improvement condition~\eqref{tighter_bound} is satisfied, which
completes the proof.
\end{proof}

\section{Full Experimental Results}
\label{sec:full_results}

In this section, we present additional experimental results, including analyses of division-collaboration and parameters.

\subsection{Analysis of Division and Collaboration}

 We further investigate the robustness of SDC against varying task granularities by testing it under two distinct continual learning scenarios on the ImageNet-R dataset, a short sequence with 5 sessions ($N=5$) and a highly fragmented long sequence with 20 sessions ($N=20$). These tests explicitly demonstrate how the synergy of principled division and structured collaboration scales across different curriculum pacings.

As shown in Table~\ref{tab:imagenet-r_perf_N5}, under the 5-session configuration, SDC achieves state-of-the-art performance with 87.24\% ACC and 89.59\% AAA. Notably, it outperforms the strongest baseline, SD-LoRA, by a substantial margin of 8.09 points in ACC and 6.58 points in AAA. This leap indicates that when the task stream is partitioned into fewer, larger chunks containing more overlapping semantic concepts, SDC effectively isolates the heterogeneous optimization directions while ensuring immediate and seamless knowledge integration among experts.

\begin{table}[htbp]
  \centering
  \setlength{\tabcolsep}{10pt}
  \small  
  \begin{tabular}{l c@{\,}l c@{\,}l}
    \toprule
    \textbf{Method} & ACC ($\uparrow$) & & AAA ($\uparrow$) & \\
    \midrule
    Fine-Tuning               & 64.92 & & 75.57 & \\
    L2P                       & 73.04 & & 76.94 & \\
    DualPrompt                & 69.99 & & 72.24 & \\
    CODA-Prompt               & 76.63 & & 80.30 & \\
    HiDe-Prompt               & 74.77 & & 78.15 & \\
    InfLoRA                   & 76.95 & & 81.81 & \\
    SD-LoRA                   & \underline{79.15} & & \underline{83.01} & \\
    \textbf{SDC}              & \textbf{87.24} & \textbf{$_{(+8.09)}$} & \textbf{89.59} & \textbf{$_{(+6.58)}$} \\
    \bottomrule
  \end{tabular}
  \caption{Performance of different methods on the ImageNet-R (number of session=5). The 1$^\mathrm{st}$ and 2$^\mathrm{nd}$ best results are highlighted in \textbf{bold} and \underline{underlined}, respectively.}
  \label{tab:imagenet-r_perf_N5}
\end{table}

Table~\ref{tab:imagenet-r_perf_N20} presents the results for the challenging 20-session setting. As the number of sequential sessions increases, the risk of catastrophic forgetting naturally escalates due to the extended chain of weight updates, which is clearly evidenced by the severe performance drop in naive Fine-Tuning (plunging to 49.95\% ACC). Despite this extreme sequence length, SDC demonstrates exceptional resilience, achieving 79.23\% ACC and 83.25\% AAA. It maintains a decisive absolute lead of 3.97 points and 3.03 points over SD-LoRA, respectively.

\begin{table}[htbp]
  \centering
  \setlength{\tabcolsep}{10pt}
  \small  
  \begin{tabular}{l c@{\,}l c@{\,}l}
    \toprule
    \textbf{Method} & ACC ($\uparrow$) & & AAA ($\uparrow$) & \\
    \midrule
    Fine-Tuning               & 49.95 & & 65.32 & \\
    L2P                       & 68.97 & & 74.16 & \\
    DualPrompt                & 65.23 & & 71.30 & \\
    CODA-Prompt               & 69.38 & & 73.95 & \\
    HiDe-Prompt               & 73.59 & & 77.93 & \\
    InfLoRA                   & 69.89 & & 76.68 & \\
    SD-LoRA                   & \underline{75.26} & & \underline{80.22} & \\
    \textbf{SDC}              & \textbf{79.23} & \textbf{$_{(+3.97)}$} & \textbf{83.25} & \textbf{$_{(+3.03)}$} \\
    \bottomrule
  \end{tabular}
  \caption{Performance of different methods on the ImageNet-R (number of session=20). The 1$^\mathrm{st}$ and 2$^\mathrm{nd}$ best results are highlighted in \textbf{bold} and \underline{underlined}, respectively.}
  \label{tab:imagenet-r_perf_N20}
\end{table}

 Table~\ref{tab:merged_model_num_singlecol} investigates how the number of expert models affects the specialization--complementarity trade-off in SDC. Increasing the number of models from two to three consistently improves both ACC and AAA across all datasets: the gains are 1.97/1.59 points on ImageNet-R, 1.72/0.71 points on CIFAR-100, and 0.99/0.62 points on CUB-200, respectively. This indicates that two experts provide insufficient capacity to isolate the heterogeneous optimization directions induced by different incremental sessions. Introducing a third expert enables a more effective division of labor, allowing semantically compatible sessions to be grouped together while preserving sufficient interactions among experts. The effect of adding a fourth model is dataset dependent. CIFAR-100 continues to benefit from a larger expert pool, achieving its best performance with four models, whereas ImageNet-R and CUB-200 exhibit slight degradation relative to the three-model setting. In particular, SDC-M3 achieves the best results on ImageNet-R and CUB-200, suggesting that excessive division can fragment related sessions across multiple experts and weaken the complementary knowledge sharing that is needed for robust continual learning. Thus, the optimal number of experts is determined by the intrinsic semantic diversity and distributional complexity of the session stream, rather than by a fixed architectural preference. Overall, these observations show that SDC should employ enough experts to separate incompatible sessions, while avoiding unnecessary fragmentation of compatible knowledge.

\begin{table}[htbp]
  \centering
  \small  
  \setlength{\tabcolsep}{2pt} 
  \begin{tabular}{lcccccc}
    \toprule
    Method & \multicolumn{2}{c}{ImageNet-R} & \multicolumn{2}{c}{CIFAR-100} & \multicolumn{2}{c}{CUB-200} \\
    \cmidrule(lr){2-3} \cmidrule(lr){4-5} \cmidrule(lr){6-7}
           & ACC($\uparrow$) & AAA($\uparrow$) & ACC($\uparrow$) & AAA($\uparrow$) & ACC($\uparrow$) & AAA($\uparrow$) \\
    \midrule
    SDC-M2 & 83.99 & 87.02 & 93.36 & 95.87 & 83.94 & 90.26 \\
    SDC-M3 & 85.96 & 88.61 & 95.08 & 96.58 & 84.93 & 90.88 \\
    SDC-M4 & 85.85 & 88.29 & 96.13 & 97.40 & 84.80 & 89.50 \\
    \bottomrule
  \end{tabular}
  \caption{Effect of the number of models on SDC performance. }
  \label{tab:merged_model_num_singlecol}
\end{table}

\begin{table*}[htbp]
  \centering
  \begin{tabular}{l c c c c c c} 
    \toprule
    \multirow{2}{*}{\textbf{Division strategy}} & \multicolumn{2}{c}{\textbf{ImageNet-R}} & \multicolumn{2}{c}{\textbf{CIFAR-100}} & \multicolumn{2}{c}{\textbf{CUB-200}} \\
    \cmidrule(lr){2-3} \cmidrule(lr){4-5} \cmidrule(lr){6-7}
                                                & ACC ($\uparrow$) & AAA ($\uparrow$) & ACC ($\uparrow$) & AAA ($\uparrow$) & ACC ($\uparrow$) & AAA ($\uparrow$) \\
    \midrule
    Uniform random division            & \underline{84.58} & \underline{87.04} & \underline{92.34} & \underline{94.75} & \underline{82.79} & \underline{88.15} \\
    Imbalanced division                & 83.06 & 86.12 & 91.64 & 94.72 & 82.13 & 87.83 \\
    Division-aware collaboration       & \textbf{85.96} & \textbf{88.61} & \textbf{95.08} & \textbf{96.58} & \textbf{84.93} & \textbf{90.88} \\
    \bottomrule
  \end{tabular}
  \caption{Effect of division strategies in SDC on multiple datasets. The 1$^\mathrm{st}$ and 2$^\mathrm{nd}$ best results are highlighted in \textbf{bold} and \underline{underlined}, respectively.}
  \label{tab:division_analysis_multi}
\end{table*}

\subsection{Ablation Study}
\label{ablation study}

As shown in Table~\ref{tab:division_analysis_multi}, we evaluate the impact of different task division strategies across the ImageNet-R, CIFAR-100, and CUB-200 datasets. The experimental results demonstrate a consistent performance hierarchy across all benchmarks. Our proposed division-aware collaboration scheme, optimized via Helmholtz free energy, universally achieves the highest ACC and AAA scores. For example, on CIFAR-100, it outperforms the uniform random division strategy by 2.74 points in ACC and 1.83 points in AAA. While uniform random division provides a reasonable baseline by evenly distributing the workload, it ignores the underlying semantic relationships between incremental sessions, leading to suboptimal expert specialization. Conversely, the imbalanced allocation strategy simulated using an extreme 1:9 distribution ratio,yields yields the poorest results across the board. This severe performance degradation occurs because it heavily overloads a single expert while starving the remaining capacity of meaningful optimization opportunities. Ultimately, these findings confirm that the Helmholtz free energy formulation successfully groups semantically compatible tasks, striking an optimal balance between parameter utilization and expert specialization.

\begin{table}[htbp]
  \centering
  \small 
  \setlength{\tabcolsep}{1mm} 
  \begin{tabular}{l c c c c c c c c}
    \toprule
    \multirow{2}{*}{\textbf{Method}} & \multirow{2}{*}{\textbf{HFES}} & \multirow{2}{*}{\textbf{DME}} & \multicolumn{2}{c}{\textbf{ImageNet-R}} & \multicolumn{2}{c}{\textbf{CIFAR-100}} & \multicolumn{2}{c}{\textbf{CUB-200}} \\
    \cmidrule(lr){4-5} \cmidrule(lr){6-7} \cmidrule(lr){8-9}
           &      &     & ACC & AAA & ACC & AAA & ACC & AAA \\
    \midrule
    \multirow{4}{*}{\textbf{SDC}} 
          & $\times$ & $\times$ & 70.62 & 76.32 & 89.54 & 93.67 & 75.64 & 83.19 \\
          & $\times$ & $\checkmark$ & 79.86 & 84.62 & 89.74 & 93.68 & 78.09 & 84.43 \\
          & $\checkmark$ & $\times$ & \underline{82.15} & \underline{86.15} & \underline{94.68} & \underline{95.98} & \underline{83.97} & \underline{90.08} \\
          & $\checkmark$ & $\checkmark$ & \textbf{85.96} & \textbf{88.61} & \textbf{95.08} & \textbf{96.58} & \textbf{84.93} & \textbf{90.88} \\
    \bottomrule
  \end{tabular}
  \caption{Ablation study of HFES and DME modules across datasets. The 1$^\mathrm{st}$ and 2$^\mathrm{nd}$ best results are highlighted in \textbf{bold} and \underline{underlined}, respectively.}
  \label{tab:ablation_hfes_dme_multi}
\end{table}

 As shown in Table~\ref{tab:ablation_hfes_dme_multi},we further conducted ablation experiments on the CIFAR-100 and CUB-200 datasets to verify the effectiveness of the HFES and DME modules.Experimental results show consistent trends across all three datasets:introducing only the DME module yields moderate performance gains;employing only the HFES module leads to more significant improvements;when both HFES and DME are activated together, the model achieves the best performance on all datasets, demonstrating that they can effectively complement each other and jointly enhance model performance.


\bibliography{aaai2027}

\begin{thebibliography}{31}
\providecommand{\natexlab}[1]{#1}

\bibitem[{Aljundi et~al.(2018)Aljundi, Babiloni, Elhoseiny, Rohrbach, and Tuytelaars}]{8}
Aljundi, R.; Babiloni, F.; Elhoseiny, M.; Rohrbach, M.; and Tuytelaars, T. 2018.
\newblock Memory Aware Synapses: Learning What (not) to Forget.
\newblock In \emph{European Conference on Computer Vision}, 144--161.

\bibitem[{Aljundi, Chakravarty, and Tuytelaars(2017)}]{9}
Aljundi, R.; Chakravarty, P.; and Tuytelaars, T. 2017.
\newblock Expert Gate: Lifelong Learning with a Network of Experts.
\newblock In \emph{Computer Vision and Pattern Recognition}, 7120--7129.

\bibitem[{Boschini et~al.(2022)Boschini, Bonicelli, Porrello, Bellitto, Pennisi, Palazzo, Spampinato, and Calderara}]{Boschini2022TransferWF}
Boschini, M.; Bonicelli, L.; Porrello, A.; Bellitto, G.; Pennisi, M.; Palazzo, S.; Spampinato, C.; and Calderara, S. 2022.
\newblock Transfer Without Forgetting.
\newblock In \emph{European Conference Computer Vision}, volume 13683, 692–709.

\bibitem[{Cui et~al.(2025)Cui, Liu, Yu, Huang, and Hong}]{26}
Cui, Y.; Liu, L.; Yu, Z.; Huang, G.; and Hong, X. 2025.
\newblock Few-Shot Audio-Visual Class-Incremental Learning with Temporal Prompting and Regularization.
\newblock In \emph{AAAI Conference on Artificial Intelligence}, volume~39, 16118--16126.

\bibitem[{Durkheim(1997)}]{durkheim1997division}
Durkheim, {\'E}. 1997.
\newblock \emph{The Division of Labor in Society}.
\newblock New York: Free Press.

\bibitem[{Hong, Kim, and Kim(2025)}]{24}
Hong, K.; Kim, G.-h.; and Kim, E. 2025.
\newblock RainbowPrompt: Diversity-Enhanced Prompt-Evolving for Continual Learning.
\newblock In \emph{IEEE/CVF International Conference on Computer Vision}, 1130--1140.

\bibitem[{Hu et~al.(2022)Hu, Shen, Wallis, Allen-Zhu, Li, Wang, Wang, and Chen}]{Hu2021LoRALA}
Hu, E.~J.; Shen, Y.; Wallis, P.; Allen-Zhu, Z.; Li, Y.; Wang, S.; Wang, L.; and Chen, W. 2022.
\newblock Lo{RA}: Low-Rank Adaptation of Large Language Models.
\newblock In \emph{International Conference on Learning Representations}, 1--20.

\bibitem[{Hu et~al.(2019)Hu, Lin, Liu, Tao, Tao, Ma, Zhao, and Yan}]{3}
Hu, W.; Lin, Z.; Liu, B.; Tao, C.; Tao, Z.; Ma, J.; Zhao, D.; and Yan, R. 2019.
\newblock Overcoming Catastrophic Forgetting via Model Adaptation.
\newblock In \emph{International Conference on Learning Representations}, 1--13.

\bibitem[{Krizhevsky and Hinton(2009)}]{Krizhevsky2009LearningML}
Krizhevsky, A.; and Hinton, G. 2009.
\newblock Learning Multiple Layers of Features from Tiny Images.
\newblock In \emph{Handbook of Systemic Autoimmune Diseases}, volume~1, 4--64.

\bibitem[{Lee et~al.(2025)Lee, Lee, chen Chiu, and Tsai}]{27}
Lee, Y.~L.; Lee, C.-Y.; chen Chiu, W.; and Tsai, Y.-H. 2025.
\newblock Exemplar Masking for Multimodal Incremental Learning.
\newblock In \emph{IEEE/CVF Conference on Computer Vision and Pattern Recognition}, 2942--2951.

\bibitem[{Li et~al.(2025{\natexlab{a}})Li, Chen, Shao, Chen, Hong, Qi, and Sun}]{6}
Li, D.; Chen, Z.; Shao, M.; Chen, X.; Hong, S.; Qi, J.; and Sun, H. 2025{\natexlab{a}}.
\newblock Non-Exemplar Class-Incremental Learning via Prototype Correction and Hierarchical Regularization for Specific Emitter Identification.
\newblock \emph{IEEE Transactions on Intelligent Transportation Systems}, 26(8): 12632--12646.

\bibitem[{Li et~al.(2025{\natexlab{b}})Li, Zeng, Dai, and Suganthan}]{10}
Li, D.; Zeng, Z.; Dai, W.; and Suganthan, P.~N. 2025{\natexlab{b}}.
\newblock Complementary Learning Subnetworks Towards Parameter-Efficient Class-Incremental Learning.
\newblock \emph{IEEE Transactions on Knowledge and Data Engineering}, 37(6): 3240--3252.

\bibitem[{Li et~al.(2025{\natexlab{c}})Li, Wang, Zhu, Lin, Yao, and Hu}]{li2025graphs}
Li, J.; Wang, Y.; Zhu, P.; Lin, W.; Yao, X.; and Hu, Q. 2025{\natexlab{c}}.
\newblock Graphs Help Graphs: Multi-Agent Graph Socialized Learning.
\newblock In \emph{Advances in Neural Information Processing Systems}, 1--28.

\bibitem[{Liang and Li(2024)}]{Liang2024InfLoRAIL}
Liang, Y.-S.; and Li, W.-J. 2024.
\newblock InfLoRA: Interference-Free Low-Rank Adaptation for Continual Learning.
\newblock In \emph{IEEE/CVF Conference on Computer Vision and Pattern Recognition}, 23638--23647.

\bibitem[{Liu and Chang(2025)}]{25}
Liu, X.; and Chang, X. 2025.
\newblock LoRA Subtraction for Drift-Resistant Space in Exemplar-Free Continual Learning.
\newblock In \emph{IEEE/CVF Conference on Computer Vision and Pattern Recognition}, 15308--15318.

\bibitem[{Shin et~al.(2017)Shin, Lee, Kim, and Kim}]{5}
Shin, H.; Lee, J.~K.; Kim, J.; and Kim, J. 2017.
\newblock Continual Learning with Deep Generative Replay.
\newblock In \emph{Advances in Neural Information Processing Systems}, 2990--2999.

\bibitem[{Shuttleworth et~al.(2025)Shuttleworth, Andreas, Torralba, and Sharma}]{Shuttleworth2024LoRAVF}
Shuttleworth, R.; Andreas, J.; Torralba, A.; and Sharma, P. 2025.
\newblock LoRA vs Full Fine-tuning: An Illusion of Equivalence.
\newblock In \emph{Advances in Neural Information Processing Systems}, 1--24.

\bibitem[{Smith et~al.(2023)Smith, Karlinsky, Gutta, Cascante-Bonilla, Kim, Arbelle, Panda, Feris, and Kira}]{Smith2022CODAPromptCD}
Smith, J.~S.; Karlinsky, L.; Gutta, V.; Cascante-Bonilla, P.; Kim, D.; Arbelle, A.; Panda, R.; Feris, R.~S.; and Kira, Z. 2023.
\newblock CODA-Prompt: COntinual Decomposed Attention-Based Prompting for Rehearsal-Free Continual Learning.
\newblock \emph{In IEEE/CVF Conference on Computer Vision and Pattern Recognition}, 11909--11919.

\bibitem[{Wah et~al.(2011)Wah, Branson, Welinder, Perona, and Belongie}]{2011The}
Wah, C.; Branson, S.; Welinder, P.; Perona, P.; and Belongie, S. 2011.
\newblock The Caltech-UCSD Birds-200-2011 Dataset.
\newblock \emph{California Institute of Technology}.

\bibitem[{Wang et~al.(2023{\natexlab{a}})Wang, Xie, Zhang, Huang, Su, and Zhu}]{16}
Wang, L.; Xie, J.; Zhang, X.; Huang, M.; Su, H.; and Zhu, J. 2023{\natexlab{a}}.
\newblock Hierarchical Decomposition of Prompt-Based Continual Learning: Rethinking Obscured Sub-optimality.
\newblock In \emph{Advances in Neural Information Processing Systems}, volume~36, 69054--69076.

\bibitem[{Wang et~al.(2023{\natexlab{b}})Wang, Liu, Ji, Wang, Wu, Jiang, Chao, Han, Wang, Shao, and Zeng}]{12}
Wang, Z.; Liu, Y.; Ji, T.; Wang, X.; Wu, Y.; Jiang, C.; Chao, Y.; Han, Z.; Wang, L.; Shao, X.; and Zeng, W. 2023{\natexlab{b}}.
\newblock Rehearsal-free Continual Language Learning via Efficient Parameter Isolation.
\newblock In \emph{Association for Computational Linguistics}, 10933--10946.

\bibitem[{Wang et~al.(2022{\natexlab{a}})Wang, Zhang, Ebrahimi, Sun, Zhang, Lee, Ren, Su, Perot, Dy, and Pfister}]{18}
Wang, Z.; Zhang, Z.; Ebrahimi, S.; Sun, R.; Zhang, H.; Lee, C.-Y.; Ren, X.; Su, G.; Perot, V.; Dy, J.~G.; and Pfister, T. 2022{\natexlab{a}}.
\newblock DualPrompt: Complementary Prompting for Rehearsal-Free Continual Learning.
\newblock \emph{European Conference on Computer Vision}, 13686: 631--648.

\bibitem[{Wang et~al.(2022{\natexlab{b}})Wang, Zhang, Lee, Zhang, Sun, Ren, Su, Perot, Dy, and Pfister}]{Wang2021LearningTP}
Wang, Z.; Zhang, Z.; Lee, C.-Y.; Zhang, H.; Sun, R.; Ren, X.; Su, G.; Perot, V.; Dy, J.~G.; and Pfister, T. 2022{\natexlab{b}}.
\newblock Learning to Prompt for Continual Learning.
\newblock In \emph{IEEE/CVF Conference on Computer Vision and Pattern Recognition}, 139--149.

\bibitem[{Wu et~al.(2025)Wu, Piao, Huang, Wang, Li, Pfister, Meng, Ma, and Wei}]{23}
Wu, Y.; Piao, H.; Huang, L.-K.; Wang, R.; Li, W.; Pfister, H.; Meng, D.; Ma, K.; and Wei, Y. 2025.
\newblock {SD}-LoRA: Scalable Decoupled Low-Rank Adaptation for Class Incremental Learning.
\newblock In \emph{International Conference on Learning Representations}, 1--17.

\bibitem[{Yang et~al.(2023)Yang, Lin, Zhang, Liu, Liang, Ji, and Ye}]{13}
Yang, B.; Lin, M.; Zhang, Y.; Liu, B.; Liang, X.; Ji, R.; and Ye, Q. 2023.
\newblock Dynamic Support Network for Few-Shot Class Incremental Learning.
\newblock \emph{IEEE Transactions on Pattern Analysis and Machine Intelligence}, 45(3): 2945--2951.

\bibitem[{Yao et~al.(2024)Yao, Wang, Zhu, Lin, Li, Li, and Hu}]{pmlr-v235-yao24d}
Yao, X.; Wang, Y.; Zhu, P.; Lin, W.; Li, J.; Li, W.; and Hu, Q. 2024.
\newblock Socialized Learning: Making Each Other Better Through Multi-Agent Collaboration.
\newblock In \emph{International Conference on Machine Learning}, volume 235, 56927--56945.

\bibitem[{Yao et~al.(2025)Yao, Wang, Zhu, Lin, Zhao, Guo, Li, and Hu}]{yao2025socialized}
Yao, X.; Wang, Y.; Zhu, P.; Lin, W.; Zhao, R.; Guo, Z.; Li, W.; and Hu, Q. 2025.
\newblock Socialized Coevolution: Advancing a Better World through Cross-Task Collaboration.
\newblock In \emph{International Conference on Machine Learning}, volume 267, 71780--71797.

\bibitem[{Zenke, Poole, and Ganguli(2017)}]{7}
Zenke, F.; Poole, B.; and Ganguli, S. 2017.
\newblock Continual Learning Through Synaptic Intelligence.
\newblock In \emph{International Conference on Machine Learning}, volume~70, 3987--3995.

\bibitem[{Zhang et~al.(2025)Zhang, Liu, Silv\'en, Pietik\"ainen, and Hu}]{zhang2025few}
Zhang, J.; Liu, L.; Silv\'en, O.; Pietik\"ainen, M.; and Hu, D. 2025.
\newblock Few-shot class-incremental learning for classification and object detection: A survey.
\newblock \emph{IEEE Transactions on Pattern Analysis and Machine Intelligence}, 47(4): 2924--2945.

\bibitem[{Zhang et~al.(2024)Zhang, Xu, Zhang, Liu, Hooi, and Deng}]{Zhang2023ExploringCM}
Zhang, J.; Xu, X.; Zhang, N.; Liu, R.; Hooi, B.; and Deng, S. 2024.
\newblock Exploring Collaboration Mechanisms for LLM Agents: A Social Psychology View.
\newblock In \emph{In Association for Computational Linguistics}, 14544--14607.

\bibitem[{Zhou et~al.(2024)Zhou, Wang, Qi, Ye, chuan Zhan, and Liu}]{Zhou2023ClassIncrementalLA}
Zhou, D.-W.; Wang, Q.; Qi, Z.; Ye, H.-J.; chuan Zhan, D.; and Liu, Z. 2024.
\newblock Class-Incremental Learning: A Survey.
\newblock \emph{IEEE Transactions on Pattern Analysis and Machine Intelligence}, 46(12): 9851--9873.

\end{thebibliography}

\end{document}